\pdfoutput=1

\documentclass[10pt]{article} %

\usepackage[accepted]{rlj} %

\usepackage{amssymb}            %
\usepackage{mathtools}  
\usepackage{algpseudocode}
\usepackage{booktabs}
\usepackage{algorithm}
 \usepackage{amsmath}%
 \usepackage{amsthm}
\usepackage{mathrsfs}           %
\usepackage{graphicx}           %
\usepackage{subcaption}         %
\usepackage{url}                %
\usepackage{wrapfig}
\usepackage{hyperref}

\newcommand{\States}{\mathcal{S}}
\newcommand{\Actions}{\mathcal{A}}
\newcommand{\Reward}{\mathcal{R}}
\newcommand{\Transition}{\mathcal{P}}

\newcommand{\fullbuffer}{\mathcal{D}}
\newcommand{\corebuffer}{\mathcal{D}_c}
\newcommand{\recencybuffer}{\mathcal{D}_r}
\newcommand{\expectile}{u}

\newtheorem{proposition}{Proposition}
\newtheorem{definition}{Definition}
\newtheorem{assumption}{Assumption}
\newtheorem{theorem}{Theorem}
\newtheorem{corollary}{Corollary}
\newtheorem{lemma}{Lemma}

\title{Endpoint Replay: Compressing the Recency Buffer in Deep Reinforcement Learning}

\setrunningtitle{Endpoint Replay: Compressing the Recency Buffer in Deep Reinforcement Learning}

\author{Parham Mohammad Panahi\textsuperscript{1,2}, Armin Ashrafi\textsuperscript{1,2}, Haoyu Du\textsuperscript{1,2}, Andrew Patterson\textsuperscript{1,2}, Martha White\textsuperscript{1,2,3}, Adam White\textsuperscript{1,2,3}}

\coverPageAuthor{Parham Mohammad Panahi, Armin Ashrafi, Haoyu Du et al.}

\emails{\{parham1,ashrafi2,du2,ap3,whitem,amw8\}@ualberta.ca}

\affiliations{
$^{1}$\textbf{Department of Computing Science, University of Alberta, Canada}\\
$^{2}$\textbf{Alberta Machine Intelligence Institute (Amii)}\\
$^{3}$\textbf{Canada CIFAR AI Chair}\\
}

\contribution{
    We introduce Endpoint Replay, a buffer-compression algorithm that pairs a small recency buffer with a coreset of chained $n$-step transitions, updated with Expectile Sarsa. With 10x to 50x less storage, it matches the performance of a standard large recency buffer in Pinball and across twelve Atari 2600 games, and outperforms equal-sized recency buffers (with 1-step or 10-step targets), reservoir-sampling coresets, and MeDQN. Ablations show anchored bootstrap states and the expectile loss drive this performance.
    }
    {
    We are not the first to shrink the buffer: continual RL has compressed buffers to improve tracking~\citep{isele2018selective,chaudhry2019tiny}, MeDQN regularizes toward the target network to use an extremely small buffer~\citep{lan2022memory}, and streaming methods remove the buffer entirely~\citep{vasan2024deep,elsayed2024streaming}. Our goal differs: retaining the benefits of a large buffer at a fraction of its size, not eliminating replay.
    }

\contribution{
    We identify the previously unrecognized issue of \emph{unanchored bootstrap targets}: a coreset of isolated transitions bootstraps off states (and actions) that are never updated by any transition in the buffer, yielding inaccurate targets. We demonstrate the issue in controlled prediction and control experiments, and show that Endpoint Replay’s chained $n$-step construction reduces bootstrap target error where interval and reservoir sampling degrade.
    }
    {
    Any coreset of isolated transitions is susceptible, including distance-based prototype selection from classical coreset construction~\citep{Mirzasoleiman2020Coresets} and prior continual RL buffers~\cite{isele2018selective,yang2024augmenting}. The issue parallels bootstrapping on out-of-sample actions in offline RL~\citep{xiao2023sample}.
    }

\contribution{
    We formalize n-step Expectile Sarsa, which mitigates the pessimistic bias of $n$-step targets computed from older, more suboptimal policies. We define the $n$-step expectile action-values, prove the associated Bellman operator converges for $n \geq 1$, and prove policy iteration converges—to the optimal policy when dynamics are deterministic. Unlike the expected return, the fixed point depends on $n$, and we prove an ordered relationship on the n-step expectile values.
    }
    {
    Expectiles were used in offline RL by IQL~\citep{kostrikovoffline} to avoid out-of-distribution actions. Convergence with 1-step bootstrapping is known for the robust setting ($\tau < 0.5$); to our knowledge, $n > 1$ has not been analyzed. 
    }

\keywords{Deep Reinforcement Learning, Experience Replay, Coresets, Multistep methods} %

\summary{Nearly every off-policy deep reinforcement learning (DRL) algorithm uses the replay recipe established by DQN: a FIFO buffer of the last one million transitions, sampled uniformly. This paper asks whether that buffer can be compressed by an order of magnitude or more without sacrificing performance. We show that the natural approach-keeping a coreset of isolated, representative transitions—fails for a previously unrecognized reason: the resulting bootstrap targets are \emph{unanchored}, querying values at states and actions that are never themselves updated, which corrupts the value function. We introduce Endpoint Replay, which compresses experience as it leaves a small recency buffer into a chain of connected n-step transitions, so every bootstrap target remains anchored, and which uses an expectile Sarsa update to counteract the pessimistic bias of n-step targets computed from older policies. We prove that the resulting n-step expectile updates are sound, and that policy iteration converges to the optimal policy under deterministic dynamics. Empirically, Endpoint Replay with 10x to 50x less storage matches large-buffer performance in Pinball and across twelve Atari 2600 games, and outperforms equal-sized buffers, reservoir-sampling coresets, and MeDQN.}

\begin{document}

\makeCover  %
\maketitle  %

\begin{abstract}
Experience replay remains one of the most practical and useful algorithmic tools in the deep reinforcement learning (DRL) toolbox. Aside from the limited success of prioritized replay and specialized approaches for large asynchronous systems, most DRL algorithms make use of a large, uniformly sampled recency buffer---even the size, one million, remains unchanged. Could we store less data, reduce redundancy, or more effectively chain experience together to speed up value propagation and still retain the performance of large buffers? In this paper, we investigate a simple compression approach that stores representative transitions derived from the end-points of a chain of connected $n$-step sequences. By curating these end-points in a smaller recency buffer, our method maintains an effective memory horizon comparable to a standard large buffer while requiring an order of magnitude less storage. Through empirical evaluation, we demonstrate that this approach prevents the systematic bias inherent in naive compression strategies and matches the performance of traditional large buffers in the Pinball environment and the Atari 2600 benchmark.\footnote{Implementation and experiments code is available on \href{https://github.com/panahiparham/endpoint-replay}{github.com/panahiparham/endpoint-replay}.}
\end{abstract}

\section{Introduction}
\label{sec:intro}

The idea of storing and reusing raw recent experience, as in Experience Replay (ER), remains one of the most useful and practical algorithmic tools in Deep Reinforcement Learning (DRL). Replay is a key component of almost every successful off-policy DRL algorithm, playing multiple roles including: de-correlating samples to improve network training and avoiding catastrophic forgetting~\citep{mnih2015human}, retaining old data which is critical for hard exploration tasks~\citep{fedus2020revisiting}, propagating error and sparse rewards~\citep{panahiinvestigating}, and learning many things in parallel~\citep{andrychowicz2017hindsight,mnih2016asynchronous,schaul2015universal,jaderberg2017reinforcement}. Widely used implementations of pretty much every popular RL algorithm, like, SAC, MPO, and even DreamerV3, all use basically the same recipe established in the original DQN paper: a FIFO buffer, sampled uniformly, with a replay ratio less than one, of size one million.   

There have been many attempts to improve on this basic recipe.
Motivated by hippocampal replay in the brain~\citep{igata2021prioritized}, the success of prioritization in tabular planning~\citep{peng1993efficient, moore1992memory}, and backward replay~\citep{lin1991programming}, one line of work explores different weighting schemes based on error/surprise   prioritization~\citep{schaul2016prioritized,panahiinvestigating,kumarintrospective}, state similarity~\citep{sun2020attentive,hong2022topological}, and temporal order~\citep{lee2019sample}. Another line of work focuses on increasing the replay ratio; the number of gradient updates performed per environment step~\citep{dsample,kang2025forget,maheshwari2025altnet}. Finally, instead of storing a buffer, a generative model could be used to simulate synthetic transitions from the real world~\citep{lu2023synthetic,wang2025prioritized}, which blurs the lines between replay and model-based RL~\citep{pan2018organizing,van2019use}. All the above variants keep the idea that the buffer should be large. In conventional RL, large buffers are almost always preferred in terms of performance~\citep{DBLP:journals/corr/abs-1712-01275,fedus2020revisiting}, though some work has shown that small buffers can be beneficial ~\citep{fedus2020revisiting,DBLP:journals/corr/abs-1712-01275,chaudhry2019tiny,kang2025forget}. Motivated by necessity, work in continual RL has investigated the utility of smaller buffers to mitigate the impact of non-stationarity and task switches~\citep{isele2018selective,rolnick2019experience,chaudhry2019tiny,chaudhry2021using,yang2024augmenting,zheng2024selective}. Others have explored extremely small replay buffers~\citep{lan2022memory}, or using no buffers at all ~\citep{elsayed2024streaming,vasan2024deep}. Our goal is to reduce the size of the replay buffer maintaining the benefits of large buffers as outlined above, not to eliminate them completely.

In this paper, we present a novel approach to reducing the replay buffer size while maintaining performance. The idea is to maintain a small recency buffer and a slightly larger buffer that, like prior work~\citep{isele2018selective} stores transitions representative of a large buffer, called a {\em coreset buffer}~\citep{Mirzasoleiman2020Coresets,pmlr-v227-zheng24a,pan2018organizing}. Together these two buffers can be 10 to 50 times the size of a conventional buffer. The idea of using multiple buffers is not new~\citep{anand2026permanent,yang2024augmenting,sinha2022experience,mohd2025dual,futuhi2024etgl,hester2018deep,wurman2022outracing}, but care must be taken in how the coreset buffer is constructed and sampled. 
Our coreset buffer is always constructed from the data as it falls out of the smaller recency buffer. From that we construct a chain of connected jumpy $n$-step transitions, dropping the transitions in between. This step is critical because a coreset buffer constructed from disconnected representative transitions (as done in prior work ~\citep{isele2018selective,yang2024augmenting}) would create temporally isolated bootstrap target values. These {\em unanchored values} provide bad bootstrap targets, significantly impacting performance as our experiments show. We call our approach {\em Endpoint Replay}. Our method differs from MeDQN~\citep{lan2022memory} which also attempts to reduce the size of the replay buffer, but does so by regularizing the current value estimates to the target network. This approach can slow learning, but more importantly requires randomly generating states which could be invalid, hurting performance as our experiments show. 

We provide experiments to illustrate the severity of unanchored states and show that Endpoint Replay is competitive with both large buffer and small buffer agents across multiple environments including 12 Atari 2600 games~\citep{machado2018revisiting}. In particular, we demonstrate how unanchored next-states in the buffer can cause inaccurate bootstrap target values in prediction and control and how Endpoint replay mitigates this inaccuracy. Comparing Endpoint replay to MeDQN, and other baselines in the Pinball environment ~\citep{konidaris2009skill}, we find our approach comparable to a much larger recency buffer even with a 50 times reduction in buffer size. Experiments in Atari 2600 yield similar results, while ablations across both Pinball and Atari show the importance of both anchoring states and our expectile update~\citep{kostrikovoffline}. These results highlight that Endpoint replay achieves our goal of maintaining the performance of large buffers while achieving significant compression across two very different environments.

\section{Problem Formulation}
\label{sec:background}

The agent-environment interaction is formulated as a Markov Decision Processes (MDP) $\mathcal{M}=\left(\States, \Actions, \Transition, \Reward, \gamma \right)$. $\States$ is the state space and $\Actions$ is the action space. The transition function $\Transition: \States \times \Actions \rightarrow \Delta(\States)$ describes the probability of transitioning from a state action pair to another state.
The reward function is defined as $\Reward: \States \times \Actions \times \States \rightarrow \mathbb{R}$ and discount factor is $\gamma \in [0, 1)$, which is set to zero at a terminal state \citep{white2017unifying} in the episodic setting. The goal is to continually improve the agent's policy, $\pi: \States \rightarrow \Delta(\Actions)$, according to the discounted sum of the future reward, $G_t \doteq r_{t+1} + \gamma r_{t+2} + \gamma^2 r_{t+3} + \dots$, called the {\em return}. To do so, many methods learn an action-value  function $Q_\theta$, parameterized by $\theta\in\mathbb{R}^d$, to estimate the expected return under $\pi$: $\mathbb{E}_\pi[G_t|S_t{=}s, A_t{=}a]$, starting from state-action pair $(s,a)$ and taking actions according to $\pi$. 

In this paper, we particularly examine algorithms  based on Deep Q-networks (DQN) \citep{mnih2015human}. DQN stores an experience replay buffer of the agent's experience and does minibatch gradient descent updates on a transition $(s,a,s',r, \gamma)$ using the  update 
\begin{equation*}
\Delta \theta \doteq (r + \gamma \max_{a' \in \mathcal{A}} Q_{\theta^-}(s', a') - Q_{\theta}(s, a)) \nabla Q_{\theta}(s, a), 
\end{equation*}
where $Q_{\theta^-}(s', a')$ is called a \emph{target network} that changes more slowly than $Q_{\theta}$, providing a more stable target. 
Note that $\gamma$ is included in the transition to indicate if $s'$ is terminal, in which case $\gamma = 0$ to ensure the update is $r  \nabla Q_{\theta}(s, a)$ at termination.  
This update can suffer from maximization bias in the bootstrap term $\max_{a' \in \mathcal{A}} Q_{\theta}(s', a')$ \citep{hasselt2010double}, ameliorated by a modified update called Double DQN \citep{van2016deep} :
\begin{equation*}
\Delta \theta \doteq (r + \gamma Q_{\theta^{-}} (s_{t+1}, \text{arg}\max_{a} Q_{\theta}(s_{t+1}, a)) - Q_{\theta}(s, a)) \nabla Q_{\theta}(s, a) .
\end{equation*}
A key question for these algorithms is how to curate the experience replay buffer. It can be impractical to store the experience from the beginning of agent learning. Further, using all of the data for the buffer may not be desirable, because less useful transitions from early learn may dominate the buffer and handling and using such a large buffer can be computationally inefficient. A common choice is to use a relatively large recency buffer, such as storing the last million samples, and dropping any older samples. Such a buffer, however, can still be quite large. There have been some attempts to compress this large recency buffer into a smaller set of transitions called a \emph{coreset}, for example by using reservoir sampling~\citep{isele2018selective}. This is the key question we tackle in our work: can we significantly compress a large recency buffer, to 10\% or even smaller of its size, and maintain comparable performance? 

\section{Endpoint Replay}
\label{sec:end-point}

In this section, we introduce a simple approach to compress a large recency buffer, which we call \emph{Endpoint replay}. We first highlight that using a coreset with non-contiguous transitions selected from across the large recency buffer suffers from a previously unrecognized issue of \emph{unanchored bootstrap targets}. We show that a simple approach based on $n$-step targets ameliorates this issue. We then discuss the complete algorithm, which includes both a coreset of $n$-step transitions and a smaller recency buffer as well as a modified (expectile) loss for the updates from the coreset to reduce bias from using $n$-step returns. 

\newcommand{\nsamples}{m}

\subsection{The unanchored bootstrap target issue}\label{sec:anchor-pred}

Let us start by considering an on-policy prediction setting. We have a policy $\pi$ that generates the dataset $\fullbuffer = \{(s_0, a_0, r_1, \gamma_1, s_1, a_1), \ldots, (s_{\nsamples - 1}, a_{\nsamples-1}, r_\nsamples, \gamma_\nsamples,s_\nsamples, a_\nsamples)$\} of $\nsamples$ transitions (with many episodes). We can think of this dataset as our replay buffer, and consider algorithms to compress it into a coreset buffer $\corebuffer$. Many algorithms, such as using reservoir sampling or selecting representative samples, will create isolated transitions $(s_t,a_t,r_{t+1},s_{t+1},a_{t+1})$ where the next transition $(s_{t+1},a_{t+1},r_{t+2},s_{t+2},a_{t+2})$ is not in the coreset buffer $\corebuffer$. For example, a simple algorithm that just selects every $n$th point will have $\corebuffer = \{(s_0, a_0, r_1, \gamma_1, s_1), (s_n, a_n, r_{n+1}, \gamma_{n+1}, s_{n+1}) \ldots \}$. By spreading out these points, we are more likely to cover the space, but run into a previously unrecognized issue: \emph{unanchored bootstrap targets}.

The issue is that we may never update the action-value for $(s_{t+1}, a_{t+1})$ used to bootstrap these isolated transitions. When we use the full buffer $\fullbuffer$ we are always updating the values for $s_{t+1}, a_{t+1}$, which then provide a reasonable bootstrap target for $s_{t}, a_{t}$. When we subsample to create isolated transitions, the value for $s_{t+1}, a_{t+1}$ is no longer anchored by updates and instead these values will be shifted by generalization. If there are similar states in the coreset, then this issue may not arise. Otherwise, these estimates could become arbitrary, or—more likely—converge to values similar to those of a nearby state $s_t$. This issue resembles the out-of-sample action issue seen in offline learning, where specialized algorithms have been developed to avoid bootstrapping on actions not well-covered for a state in the dataset \citep{xiao2023sample}. Yet, this issue has not been recognized when using coresets in the online RL setting, and is even more problematic because isolated transitions make both the next state and action out-of-sample. 

We propose a simple fix for this problem: $n$-step returns. We can still compress the size $\nsamples$ dataset $\fullbuffer$ to a coreset $\corebuffer$ of size $\nsamples/n$ using jumpy transitions that aggregate the reward for $n$ steps between states: for $g_{t,n} \doteq \sum_{i=0}^{n-1} \gamma^{i} r_{t+i+1}$, we store $(s_{t},a_{t},g_{t,n},\gamma^n, s_{t+n},a_{t+n})$. Our coreset buffer is  
\begin{equation*}
    \corebuffer = \{(s_0, a_0, g_{0,n}, \gamma^n, s_{n}, a_{n}), (s_{n}, a_n, g_{n,n}, \gamma^n, s_{2n}, a_{2n}), (s_{2n}, a_{2n}, g_{2n,n}, \gamma^n, s_{3n}, a_{3n}) \ldots \}.
\end{equation*}
Now when we update with a transition $(s, a, g, \gamma_n, s',a') \in \corebuffer$, the target $g + \gamma_n q(s',a')$ is an $n$-step target and further every $s',a'$ we bootstrap from is itself in the coreset as the beginning of another $n$-step tuple that gets updated. This simple change to chain $n$-step transitions now allows the same number of transitions to be stored as for the isolated, unanchored bootstrap targets, but with anchored bootstrap targets. 

We demonstrate that this conceptual problem does indeed manifest in practice. Again, we start in prediction where conceptually this is already a clear problem and then also demonstrate it persists in control in Section \ref{sec:anchor-control}. 
We use the PinBall environment~\citep{konidaris2009skill,lo2024goal} for this experiment. We first train a Double DQN (DDQN) agent with $\varepsilon$-greedy exploration until it reaches good online performance. We use the $\varepsilon$-greedy policy of the trained agent to generate a dataset of 1,000 transitions. We use the same agent to generate 10 datasets under different seeds.

We then extract a 1-step unanchored coreset and a 10-step anchored coreset out of the full dataset. The two coresets have the same size of around 100 transitions, and they contain corresponding transitions $(s_{t+9},a_{t+9},r_{t+10}, \gamma, s_{t+10},a_{t+10})$ and $(s_t,a_t,g_{t, n} = \sum_{i=0}^{9} \gamma^{i} r_{t+i+1},\gamma^n, s_{t+10},a_{t+10})$ for $t\in\{0,10,20,\dots\}$, respectively. In other words, we control the setup so that the same bootstrap states and actions $(s_{t+10},a_{t+10})$ exist in the $1$-step coreset and the 10-step coreset. Note that $n$-step aggregation can be terminated early, if a terminal state is reached. For example, if we have $s_t, a_t, r_{t+1}, \gamma_{t+1}, s_{t+1}, a_{t+1}, r_{t+2}, \gamma_{t+2}, s_{t+2}, a_{t+2}$ and $\gamma_{t+2} = 0$ indicates termination, then we store $(s_t,a_t,g_{t, 2} = r_{t+1} + \gamma r_{t+2}, \gamma_n = 0, s_{t+2},a_{t+2})$. The update does not bootstrap off of $(s_{t+2},a_{t+2})$, because $\gamma_n = 0$, and $(s_{t+2},a_{t+2})$ are actually the pair at the start of the next episode. 

\begin{wrapfigure}[17]{r}{0.5\textwidth}
\vspace{-.6cm}
    \centering
    \includegraphics[width=\linewidth]{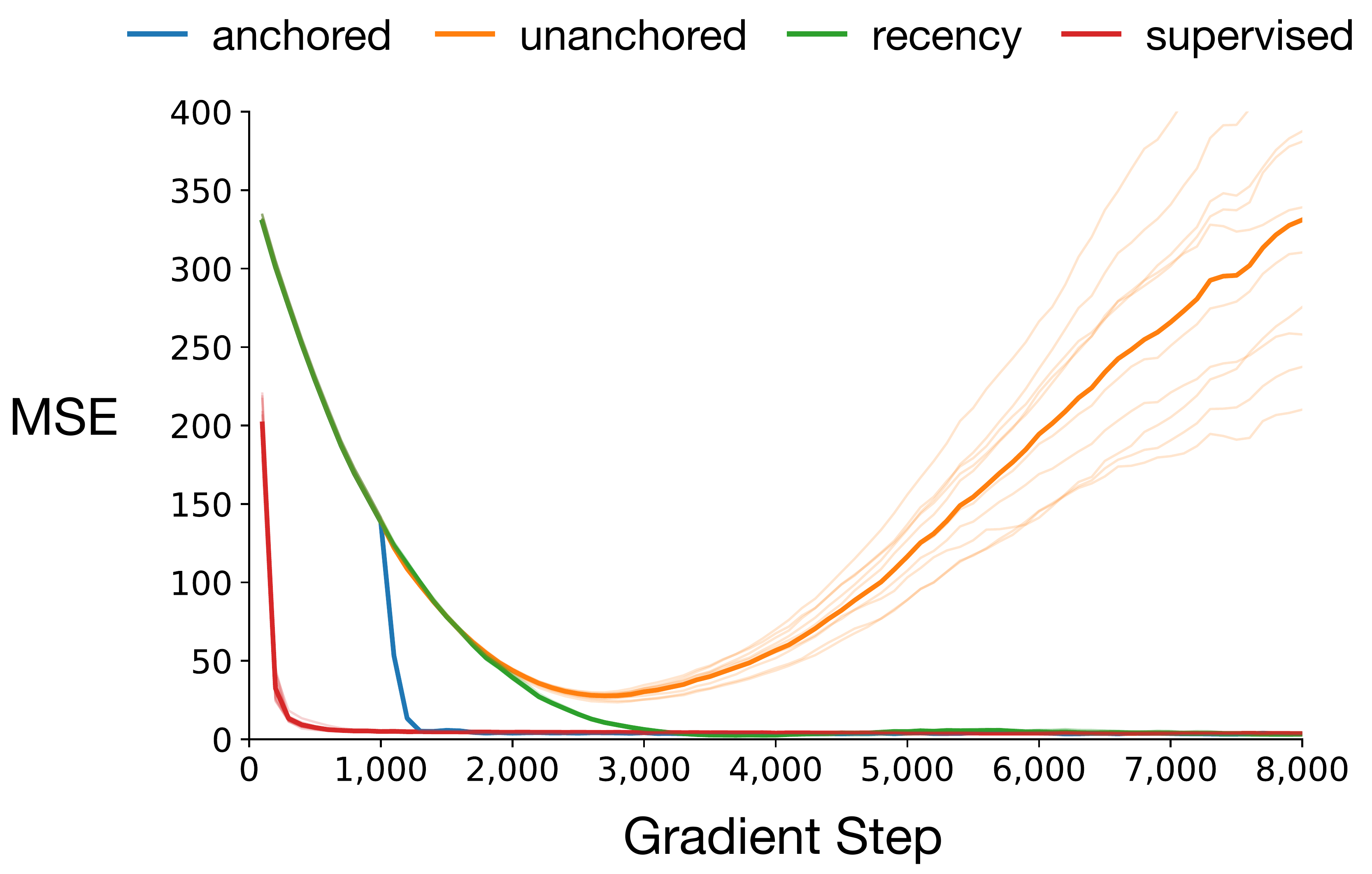}
    \caption{Mean square error between the bootstrapped target $Q(s_{t+10},a_{t+10})$ and sample return $g_{t+10}$ of anchored, unanchored, recency and supervised learners. Average over 100 seeds. Individual runs of 10 random seeds are shown in light thin lines to improve visibility.}
    \label{fig:pinball_prediction}
\end{wrapfigure}

For this experiment, we also store the Monte Carlo returns $G_{t+10}$ from the bootstrap state and action for each transition in the coreset, to compute the return error to evaluate the accuracy of the learned action-values for the bootstrap state and action. 
Specifically, we measure the deviation of the bootstrapped target $Q(s_{t+10},a_{t+10})$ to the sample return $G_{t+10}$ using the mean squared error. 

We have four learners: anchored, unanchored, recency, and supervised. The anchored, unanchored, and recency learners have the same architecture with online and target networks same as the agent that generates the dataset. They are trained with TD updates on randomly sampled mini-batches. The anchored and unanchored learners have a burn-in phase of 1,000 gradient updates, where they are trained on the full dataset. Then the anchored learner switches to the 10-step coreset, and the unanchored learner to the 1-step coreset. The recency learner is a baseline and is trained on the full dataset throughout. The other baseline is the supervised learner that directly learns on the sample returns $G_t$. All learners are trained with a total of 8,000 gradient updates.

As shown in Figure \ref{fig:pinball_prediction}, the anchored and supervised learners are able to learn the correct action-value of the bootstrapped targets. The recency learner, despite learning slowly, also finds the correct values eventually. In contrast, the unanchored learner diverges.

\subsection{Expectile Sarsa: reducing the pessimistic bias from the $n$-step targets in control}
\label{sec-expectile-sarsa}

In the prediction experiment above, there was no bias from using an $n$-step target, because all the data is generated on-policy. However, in control, the buffer is composed of transitions from older policies. The $n$-step target is biased, and likely pessimistic because the older policies were likely more suboptimal. Consider an example where state $s_{t+n}$ has high value and we have stored transition $(s_{t},a_{t},\sum_{i=0}^{n-1} \gamma^{i} r_{t+i+1},s_{t+n},a_{t+n})$ collected under older policies. The current policy may be able to get higher rewards when going from $s_{t}$ to $s_{t+n}$, or may be able to get there in fewer steps. 

This bias is potentially one of the reasons larger $n$ are not used in DQN algorithms, when $n$-step targets are used at all. Common choices are small values like $n = 3$ or 5 \citep{tang2023va,hessel2018rainbow}. This bias could be reduced by using importance sampling, but this is generally avoided due to high variance \citep{munos2016safe}.

We instead propose using an expectile loss to mitigate this pessimistic bias. While standard squared-error losses (like in DQN) estimate the expected value of a target, an expectile loss with parameter $\tau \in (0,1)$ learns a generalized mean called the expectile $u$. Specifically, $u$ is the point where the expected downward deviations (weighted by $1 - \tau$) balance the expected upward deviations (weighted by $\tau$). More precisely, the expectile $u$ for a random variable $X$ satisfies $(1 - \tau)\mathbb{E}_{x < u}[u - x] = \tau\mathbb{E}_{x > u}[x - u]$. As $\tau$ approaches $1$, $u$ shifts toward the maximum value of the distribution because the downward deviations must be heavily discounted to equal the remaining upward expectations. At $\tau = 0.5$, the expectile is the mean. To estimate $u$, we employ an asymmetric loss 
that separately weight positive and negative deviations $\delta = x - u$:
\begin{equation*}
    \ell_\tau(u) \doteq \tau \mathbb{I}(\delta \ge 0)\delta^2 + (1 - \tau) \mathbb{I}(\delta < 0)\delta^2.
\end{equation*}
The expectile for a random variables is defined as $e_{\tau}(Z) \doteq \arg\min_\expectile \mathbb{E}[\ell_\tau(Z - \expectile)]$.
When $\tau = 0.5$, this formulation simplifies to the standard mean-squared error. This asymmetric property is leveraged by Implicit Q-Learning (IQL)~\citep{kostrikovoffline}, an offline RL algorithm that uses expectiles to maximize over 1-step transitions without querying actions outside the dataset, effectively avoiding out-of-distribution issues.

We use the expectile loss, from $s_t, a_t$, on targets $G_t^{(n)}(Q) \doteq \sum_{i=0}^{n-1} \gamma^{i} r_{t+i+1} + \gamma^n Q(s_{t+n},a_{t+n})$. 
We use $Q(s_{t+n},a_{t+n})$, instead of the typical $\max_{\tilde{a}} Q(s_{t+n},\tilde{a})$ in control, to ensure we have anchored bootstrap targets. We need to use the $a_{t+n}$ actually taken by the agent in the dataset because then we can ensure we update $Q(s_{t+1},a_{t+1})$ with its own $n$-step target. If we used $\max_{\tilde{a}} Q(s_{t+n},\tilde{a})$, and $\tilde{a} \neq a_{t+n}$, then we would be querying $Q(s_{t+n},\tilde{a})$ for an anchored state but {\bf unanchored action}. We empirically investigate the utility of having both an anchored state and action, as opposed to only an anchored state in Section \ref{sec:anchor-control}. Because we use the actions from the policy that generated the data, we call this approach \emph{$n$-step Expectile Sarsa}. 

\textbf{Convergence under n-step expectile Bellman operators:} We can formalize the action-values that are being approximated by $n$-step Expectile Sarsa, and show that we can get sound Bellman updates with these action-values. 
For any policy $\pi$, we define the \emph{n-step expectile action-values} 
as
\begin{equation}
    Q^{(n)}_{\tau, \pi}(s,a) = e_\tau(G^{(n)}_t(Q^{(n)}_{\tau, \pi}))\label{eq_nstep_values}
\end{equation}
which is the expectile across all n-step returns under $\pi$ starting from $s,a$. A key difference from standard n-step boostrapping approaches in RL (including the standard 1-step bootstrap) is that we lose linearity due to the expectile. There is still a recursion, but it is now non-linear. Fortunately, we can still define an n-step expectile Bellman operator and prove convergence. We formalize this statement and prove the result in \textbf{Theorem \ref{thm_n_fixed}, Appendix \ref{app_bellman}}. This result also guarantees that a solution to the recursion in Equation \ref{eq_nstep_values} exists, and so $Q^{(n)}_{\tau, \pi}$ is well-defined. 

There has been some work formalizing Bellman operators under expectiles. In the robust setting, with $\tau < 0.5$ and 1-step bootstrapping, convergence with an expectile Bellman operator has been shown \citep{clavier2024bootstrapping}. We provide a more general result for $n \ge 1$; to the best of our knowledge, $n > 1$ with expectile Bellman operators has not previously been analyzed. Risk-sensitive algorithms have been proposed that again look at 1-step updates \citep{shen2014risk,luo2026actor}. Multi-step distributional reinforcement learning has been analyzed, but this involved considering the whole distribution \citep{tang2022thenature}.

\textbf{Ordering over $n$-step expectile values:} Another key difference from the standard expected return setting is that we get different action-values for different $n$. In the tabular setting, with expected return, different choices of $n$ simply give a bias-variance trade-off during learning but do not impact the final convergence point. For n-step expectile action-values, the choice of $n$ reflects more or less conservative value estimates. Specifically, we show in \textbf{Theorem \ref{thm_ordering}, Appendix \ref{app_ordered}} that for $\tau \ge 0.5$, for any multiplier $a \in \mathbb{N}$
$V^{(n)}_{\tau, \pi}(s) \ge V^{(a\cdot n)}_{\tau, \pi}(s)$. Intuitively, this makes sense, as taking the expectile over a longer n-step return has the chance for some cancellation in this sum as opposed to nesting the expectiles sooner. An important future direction is to better understand how to choose $n$, in terms of the quality of the policy we obtain when greedifying on $Q^{(n)}_{\tau, \pi}$.

\textbf{Convergence under policy iteration:} Ultimately, beyond ensuring soundness of our expectile updates for a fixed policy, we also want to do policy improvement. We show that we can get convergence to an optimal n-step expectile policy, under the standard policy iteration updates. As per above, the optimal policy could be different depending on the choice of $\tau$ and $n$, once again different from the expected return setting. This result also requires a stronger condition on the greedification that may not always be satisfied by soft-greedy policies like softmax or $\epsilon$-greedy policies.
We provide these conditions, prove convergence under policy iteration in \textbf{Theorem \ref{thm_main}} and provide more discussion in Appendix \ref{app_theory_pi}.

\textbf{Convergence to the optimal policy under deterministic dynamics:} We can finally also ask how these policies relate to the optimal policy. We can show that this procedure converges to the optimal policy, but under one key constraint: the environment dynamics are deterministic. The reason for this constraint is that the expectile loss can chase environment stochasticity rather than putting a higher weight on actions that result in higher expected return. For example, if the reward from $(s,a)$ has very high-variance, then $Q_{\tau, \pi}(s,a)$ may also be high because the expectile more heavily weights returns that have rewards in this higher tail.\footnote{IQL only used the expectile to maximize over actions in the dataset, specifically using $\ell_\tau(Q(s,a) - V(s))$ to learn a value function $V$ that reflected the values for a greedified policy (where the level of greedification/maximization is determined by $\tau$). The action-values are updated using one-step transitions that bootstrap off of $V$: $r + \gamma V(s')$. They therefore avoid this bias from maximizing over environment stochasticity.} If the environment is deterministic, then this issue does not arise, and instead the expectile only maximizes over the sequence of actions. In other words, the expectile puts higher weight on returns that resulted from better sequences of actions. %
The updates concentrate on good actions without erroneously chasing stochasticity. We show that policy iteration with the n-step expectile values does converge to the optimal policy in the deterministic setting, in Corollary \ref{cor_optimal} in Appendix \ref{app_theory_pi}.  

We expect empirically that $n$-step Expectile Sarsa will be robust to a small amount of environment stochasticity. With low environment stochasticity, using the expectile loss should largely be maximizing over action sequences, particularly in earlier learning when it is the dominant source of differences in returns. Bias due to stochasticity is also partially mitigated because the coreset buffer is gradually shifting. It can also be explicitly reduced by decaying $\tau$ to 0.5 to shift to using the standard squared error which estimates the expected return. In its current form, however, this algorithm would not be appropriate for environments with high stochasticity.

\subsection{The Complete Endpoint Replay Algorithm}

The final piece of the algorithm is to use two buffers: a small recency buffer for regular one-step updates on new data and the coreset buffer with $n$-step targets to reinforce values on older data. This two-buffer approach has been previously used in continual RL to balance fast tracking and preventing forgetting~\citep{yang2024augmenting,anand2026permanent}. The recency buffer $\recencybuffer$ simply stores the most recent $j$ points and the coreset buffer $\corebuffer$ summarizes the experience $j+1$ steps ago and older. By default, we pick a 90-10 rule: when trying to mimic a large recency buffer of size one million with approximately 10\% of the storage, we use $j = 10k$ for $\recencybuffer$ and $90k$ for the coreset $\corebuffer$.

Our implementation technically maintains a third buffer, to convert the one-step transitions that age out of the small recency buffer into $n$-step updates for the coreset. This tiny \emph{lag} buffer stores up to $n$ transitions (e.g., $n=10$), moved from the recency buffer to the end of the lag buffer. Once the lag buffer fills with $n$ transitions, the $n$-step return is computed, and the transition $(s_t, a_t, \sum_{i=0}^{n-1} \gamma^i r_{t+i+1} , s_{t+n}, a_{t+n})$ is added to the coreset $\corebuffer$.
One small nuance is if termination occurs before $n$ steps, say in $k < n$ steps. In this case, the shorter $k$-step return is computed and the $k$-step transition added to the coreset buffer. The lag buffer would simply reset after $k$ steps.   

On each step, a mixed mini-batch of samples from $\recencybuffer$ and $\corebuffer$ are used, with the expectile Sarsa update used for samples from $\corebuffer$ and standard DDQN updates used on samples from $\recencybuffer$. One question is how many samples to use from each.
The small recency buffer allows for updating on new experience, and the coreset acts to reinforce updates across the space. We found it more effective to sample more from the small recency, to learn more from these new experience, allowing for the samples from the coreset to play more of a regularizing effect. In addition, the expected number of updates per sample is highly sensitive in deep RL \citep{fedus2020revisiting}; by sampling less from coreset we bring expected update per coreset sample down to similar values had they been in a large recency buffer. Specifically, in our experiments we found a relatively high ratio of 7:1 to be effective. 
For example, for a mini-batch of size 32, we use 28 samples from $\recencybuffer$ and 4 from $\corebuffer$.

We summarize this in Algorithm \ref{alg:endpoint_replay}, in Appendix \ref{app_algspec}. We call the algorithm Endpoint Replay, because a key design of the algorithm is to anchor endpoints in the coreset.

\section{The impact of unanchored states in control}
\label{sec:anchor-control}
This section investigates the impact of unanchored targets in the control setting. Earlier we showed unanchored bootstrap values can get worse over time when learning from isolated samples in the prediction setting. We hypothesize that the same effect also occurs in the control setting: unanchored bootstrap values degrade over time. Measuring and analyzing bootstrap value errors in the control setting is more difficult than in prediction because the policy continually changes meaning the value estimates and the ground truth values change over time, and even the return is no longer an unbiased estimate of the value.

To navigate the difficulty inherent in measuring value errors in control we designed the following experiment. First a DDQN agent with a large recency buffer is trained until a good policy is learned. This ensures the agent's policy and value estimates are reasonably accurate. Then the agent's buffer is replaced with an Endpoint replay buffer filled using with the agent's interaction history. The agent then continues to interact with the environment, adding new data to both its recency and coreset buffers, and learn from buffer mini-batches. The goal is to check whether (1) an agent can maintain good performance (in terms of return) after switching to Endpoint replay compared with an unanchored coreset baseline and (2) the values of the boostrap states more accurate compared with an unanchored coreset baseline.

To measure bootstrap value errors, we compare the agent's current target network estimate at bootstrap states in the buffer to the returns observed at the completion of episodes of those bootstrap states. The return measured at the end of a previous episode is not an unbiased estimate of the current value function, it is rather a rough approximation that we control by starting the experiment with a good policy and keeping the buffer size reasonable. 
We compute the squared error for all bootstrap states in the buffer and report the average error.

We use the Pinball domain~\citep{konidaris2009skill,lo2024goal} and start by training a tuned DDQN agent with a recency buffer of size 10k for 50k time steps until it reaches good but not optimal performance. Then we swap this recency buffer with a replay buffer with recency buffer of size 100 and coreset buffer of size 900. This Endpoint replay agent performs 10-step sub-sampling. The agent continues learning for another 50k steps over which we measure the undiscounted return and the root mean square of coreset buffer's bootstrap target value errors as discussed above. We compare Endpoint replay with two unanchored baselines using 1-step DDQN updates. The first coreset, which we call Interval, sub-samples transitions every 10 steps, similar to the unanchored algorithm in Figure \ref{fig:pinball_prediction}. The second coreset is based on reservoir sampling \citep{isele2018selective} and maintains a uniform sample of the history of the agent within the coreset buffer. Both of these baselines do not anchor states or actions. Each algorithm is run for 100 independent trials. See the supplementary materials for environment details and list of hyperparameters.

\begin{figure}[htb]
    \centering
    \includegraphics[width=0.9\textwidth]{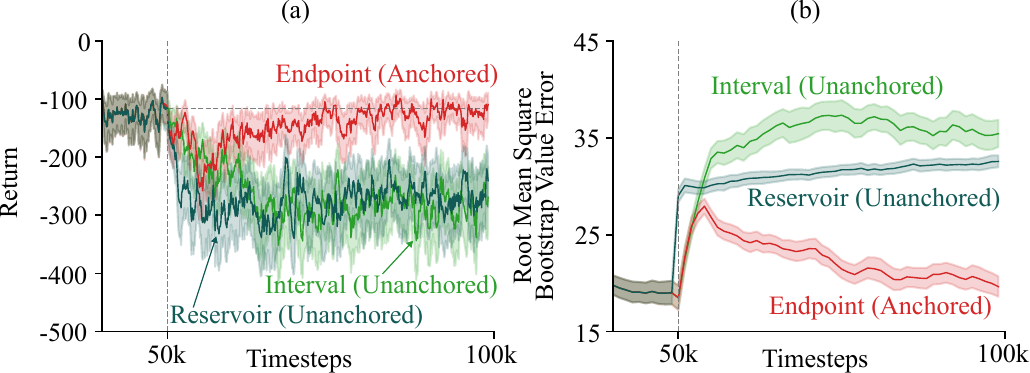}
    \centering
    {\phantomsubcaption\label{fig:anchor-control-1000-a}}%
    {\phantomsubcaption\label{fig:anchor-control-1000-b}}%
    \caption{Endpoint replay can maintain better performance (left) and has lower bootstrap target value error (right) compared with both unanchored coreset baselines. The vertical dashed line indicates the moment we switch from a large recency buffer to a smaller coreset buffer. The horizontal dashed line marks the performance of the agent before the switch. Results are averaged over 100 seeds, shaded regions are 95\% bootstrap confidence intervals.}
    \label{fig:anchor-control-1000}
\end{figure}

Figure \ref{fig:anchor-control-1000} shows the outcome of the experiment. After switching to the coreset, all three agents experience a dip in performance. Endpoint replay recovers quickly, with the initial dip due to the sudden switch to n-step updates. The unanchored agents performance just degrades to a suboptimal policy as we see in Figure \ref{fig:anchor-control-1000-a}. This trend in performance is correlated with the measured bootstrap target value error of transitions in the buffer as shown in Figure \ref{fig:anchor-control-1000-b}. We see an increase in the value error following the switch. After some time we see Endpoint's bootstrap value errors reduce while errors continue to increase in the unanchored coresets. %

The results presented in this section as well as those in the prediction setting presented earlier, show that isolated, unanchored transitions in a coreset buffer can lead to poor bootstrap value estimates and harm performance. We showed how Endpoint replay mitigates this issue by anchoring states with chained n-step targets and anchoring actions with expectile Sarsa updates, reducing bootstrap value error. In the next section we will demonstrate that the differences we see here in this controlled experiment translate into performance differences. %

\section{Benchmark Experiments for Endpoint replay}
\label{sec:control-perf}
In this section we benchmark Endpoint replay in two domains against several baselines. We use the Pinball environment and a subset of games in the Arcade Learning Environment \citep{bellemare2013arcade}. We also present ablation studies of Endpoint replay, demonstrating that anchoring
states and actions is crucial for strong performance, and that expectile updates appear to reduce $n$-step bias.

First, we compare the performance of Endpoint replay against several baselines in Pinball. We test a large 10k recency buffer, a small recency buffer, a small recency buffer with 10-step updates. The inclusion of the small recency buffer with 10-step helps distinguish if the benefits of Endpoint replay are completely explained by the use of n-step updates. We include a DDQN variant of ~\citeauthor{lan2022memory}'s MeDQN, which must sample random state-action pairs in order to compute the regularization to the target network values. This approach risks generating invalid or never visited state-action pairs, but is unavoidable without saving a large buffer which would defeat the purpose. We also include a coreset method based on reservoir sampling~\citep{isele2018selective} using DDQN updates. Reservoir sampling ensures transitions added to the coreset are sampled uniformly over the agent's history, though they are still isolated and unanchored. We consider two settings where the Endpoint replay buffers and small buffers are 10 and 20 times smaller than the big buffer (1000 and 500 transitions).

There are several important details to note.
The batchsize of all agents is 32, Endpoint replay chooses 28 samples from the recency buffer and 4 samples from the coreset. All agents use a two layer neural network with 32 hidden units and ReLU activations updated with the Adam optimizer. The 1k Endpoint agent uses a 100 transition recency buffer and a 900 transition coreset. The 500 Endpoint agent uses 100 and 400 transitions for the recency and coreset buffers.
The learning rate is tuned for the large recency baseline and the same value is used across all agents. The experiment was run for 100k time steps and averaged over 100 independent trials. MeDDQN used a buffer size of 1000 and 500. Prior work~\citep{lan2022memory} suggested a batchsize of 32 could work but we found that performed quite badly. MeDDQN has a hyperparameter $\lambda$ which weights the loss. We swept three different values of $\lambda \in \{1, 2, 4\}$ with 30 seeds each, then ran the best performing value ($\lambda=2$) for another 100 seeds following the two-stage tuning approach~\citep{patterson2024empirical}. We report undiscounted return over time as performance metric. See the supplementary materials for the environment details, and list of hyperparameters used.%

\begin{figure}[ht]
    \centering
    \includegraphics[width=0.95\textwidth]{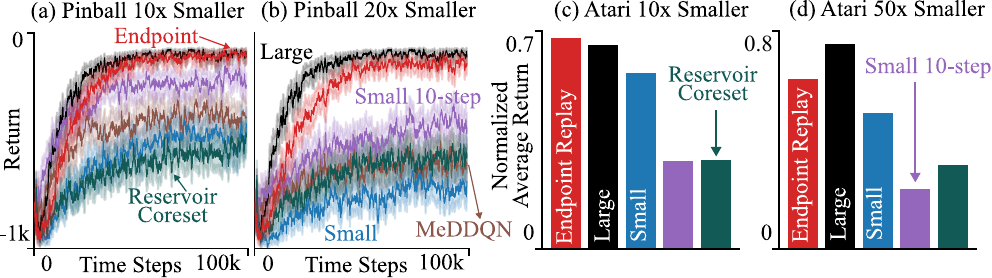}
    \centering
    {\phantomsubcaption\label{fig:control-perf-a}}%
    {\phantomsubcaption\label{fig:control-perf-b}}%
    {\phantomsubcaption\label{fig:control-perf-c}}%
    {\phantomsubcaption\label{fig:control-perf-d}}%
    \caption{Endpoint replay performs comparably to a much larger recency buffer while outperforms an equal size recency buffer. In Pinball we report mean performance over 100 seeds and shaded regions are 95\% bootstrap confidence intervals. Atari results are aggregated scores across 12 games and 10 seeds, min-max normalized.}
    \label{fig:control-perf}
    \vspace{-0.5cm}
\end{figure}

Figures \ref{fig:control-perf-a} and \ref{fig:control-perf-b} summarize the results in Pinball. In both cases (10x and 20x smaller) Endpoint replay outperforms the small but an equal size recency buffer and approaches the performance of the tuned, much larger, recency buffer. The small recency baseline with 10-step targets does perform better than its 1-step counterpart but still under-performs compared to Endpoint replay. Interestingly, in the 20x smaller case, the performance of the small buffer and small 10-step buffer drops, whereas Endpoint's performance hardly changes. MeDDQN performed  poorly, likely because generating random states in Pinball is problematic due to obstacles and the fact that many combinations of velocity and position are not possible. Similarly reservoir sampling also performed poorly in Pinball suggesting unanchored updates can degrade performance even when the transitions are sampled over the agent's entire lifetime.

Next we investigate performance of Endpoint replay in larger pixel-based environments. We select 12 games from the Arcade Learning Environment \citep{bellemare2013arcade}, including the Atari-5 games \citep{aitchison2023atari}, and 7 games selected to highlight a diverse range of problems. See the supplementary materials for details about the selected games. Much like the Pinball experiments, we compare Endpoint replay with a recency buffer that is 10 times larger, consisting of one million samples, as is typical in Atari. Here the small baselines and Endpoint use 100k transitions.

We keep the relative sizes of buffers the same as in the Pinball experiment.
In the 100k setting (10x smaller), we allocate 10k to the Endpoint replay's recency buffer and 90k to its coreset. In the 20k setting (50x smaller) we allocate 10k to the Endpoint replay's recency buffer and 10k to its coreset. The coreset sub-samples every 10 steps while accumulating 10-step targets. The batchsize is 32 and like Pinball, we sample 28 from the recency buffer and 4 from the coreset. All the other agent hyperparameters and network structure follow the Dopamine baseline which closely follows the original DQN and DDQN implementations \citep{mnih2013atari,van2016deep} except using the Adam optimizer which is shown to outperform RMSprop \citep{hessel2018rainbow}. We run the experiment for 50 million frames (to save computation but still allow plenty of time for learning, as in prior work~\citep{lan2022memory}) and report undiscounted return observed online during training. Performance is normalized by the best and worst performing seed in each game across algorithm variations. Each variation is run for 10 independent trials. We do not include MeDDQN to reduce computational costs as it performed so poorly in Pinball but do include reservoir sampling as another baseline with unanchored updates. See the supplementary materials for details about selected games, environment settings, and list of hyperparameters.

Figure \ref{fig:control-perf-c} summarizes the results when the Endpoint buffer is 10x smaller. Endpoint replay outperforms equal size recency buffer (small)\footnote{In the 100k setting Endpoint outperforms Small significantly according to the sign test: better in 82 / 120 paired seeds.} and outperforms the small recency buffer with 10-step targets. In this particular setting, Endpoint replay performs even better than the large recency baseline. Looking at the individual game performance in supplement Figure ~\ref{fig:atari_100k}, we see Endpoint ties the large buffer in 1 game, outperforms in 6 games, and does slightly worse in 5. When Endpoint wins, it does so by a slightly larger margin because there are some games where a small buffer outperforms a large buffer, which has been observed before ~\citep{fedus2020revisiting,DBLP:journals/corr/abs-1712-01275}. Reservoir coreset performs poorly and similar to small recency buffer with 10-step targets. In the 50x smaller setting, shown in Figure \ref{fig:control-perf-d} we see Endpoint replay outperforming both 1-step\footnote{In the 20k setting Endpoint outperforms Small significantly according to the sign test: better in 80 / 120 paired seeds.} and 10-step small recency baselines but this time does not quite reach the performance of large baseline. Even with a 50x smaller buffer Endpoint replay performs at least as well as the large buffer in 5 of 12 games as show in supplement Figure \ref{fig:atari_20k}. Reservoir sampling under performs Endpoint replay and both 1-step recency baselines in this setting as well. Per game learning curves are provided in supplement Figures \ref{fig:atari_100k_curves} and \ref{fig:atari_20k_curves}.

As in Pinball, we again see Endpoint replay outperforms the same sized small buffer outfit with 10-step updates.
This suggests the different algorithm components of Endpoint replay work together to achieve good performance. %

We preformed an ablation study to investigate importance of each of the major algorithmic choices in Endpoint replay. The experiment setup is identical to those presented earlier in this section. Specifically we consider the following ablations (1) squared versus expectile loss, (2) anchored versus unanchored actions (Sarsa vs DDQN updates), and (3) anchored versus unanchored states and actions (1-step vs chained 10-step). The 1-step unanchored variant uses squared loss and DDQN updates identical to the Interval coreset used in Sections \ref{sec:anchor-pred} and \ref{sec:anchor-control}.

\begin{figure}[bt]
    \centering
    \includegraphics[width=0.95\textwidth]{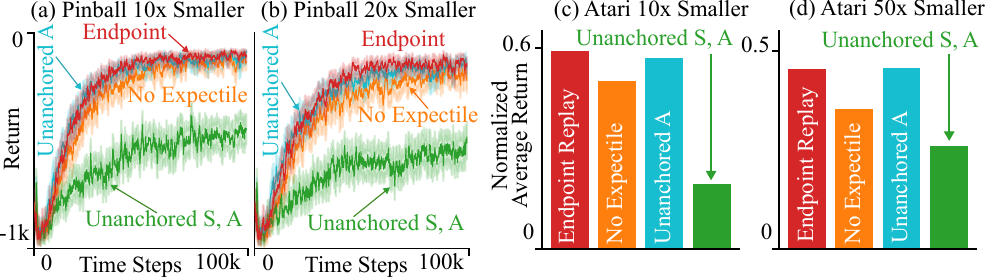}
    
    {\phantomsubcaption\label{fig:control-ablations-a}}%
    {\phantomsubcaption\label{fig:control-ablations-b}}%
    {\phantomsubcaption\label{fig:control-ablations-c}}%
    {\phantomsubcaption\label{fig:control-ablations-d}}%
    \caption{Ablation study of Endpoint replay in Pinball and Atari. In Pinball we report mean performance over 100 seeds. The shaded regions are 95\% bootstrap confidence intervals. Atari results are aggregated scores across 10 seeds in each of 12 games, min-max normalized.} 
    \label{fig:control-ablations} 
\end{figure}

The results are presented in Figure \ref{fig:control-ablations}. In Pinball (Figures \ref{fig:control-ablations-a} and \ref{fig:control-ablations-b}) we see Endpoint replay performs best. Anchoring states and actions appears to be crucial while anchoring actions is not as important. The expectile loss does matter and this effect is more pronounced in 20x smaller setting. The results in Atari games (Figures \ref{fig:control-ablations-c} and \ref{fig:control-ablations-d}) largely agrees with Pinball results. Namely anchoring state and expectile updates matter most.\footnote{Endpoint outperforms No Expectile significantly according to the sign test: better in 79 / 120 paired seeds in the 100k setting and better in 93 / 120 paired seeds in the 20k setting.} Per game results can be found in supplement Figures \ref{fig:ablations_atari_100k} and \ref{fig:ablations_atari_20k} as well as learning curves in Figures \ref{fig:ablations_atari_100k_curves} and \ref{fig:ablations_atari_20k_curves}.

The ablation study presented above shows that the main design components of Endpoint replay, namely: anchoring the bootstrap state with chained n-step targets and correcting pessimistic bias of on-policy n-step SARSA updates with expectile loss contribute to the final performance. Although, anchoring the action mattered less in terms of performance, we view this as a positive result because action anchoring is more correct than not and it does not appear to hurt performance.

\section{Conclusion}
\label{sec:conclusion}
In this paper we introduced Endpoint replay, a simple approach designed to compress a large recency buffer into a much smaller coreset and recency buffer with the goal to maintain comparable performance. We found we could reduce the size of the buffer by 10x or even 50x, and indeed maintain comparable performance to using a large recency buffer across twelve Atari environments. To motivate the algorithm, we identified a key issue with extending the typical approach to creating coresets to DRL: they produce isolated transitions that have unanchored bootstrap targets. These unanchored targets can cause the action-values to become highly inaccurate, as we demonstrated in carefully controlled experiments in both prediction and control. We resolved this using $n$-step returns, to ensure each bootstrap target remains anchored in the coreset. To mitigate the pessimistic bias from  using $n$-step returns, we introduced an $n$-step Expectile Sarsa and showed empirically that it improved performance over using a standard $n$-step update. 

\subsubsection*{Broader Impact Statement}
\label{sec:impact}
Our work focuses on making deep RL methods use less resources while maintaining performance. Though this has a clear benefit in terms of reducing the environmental impact of training costs and democratizing research, one could simply use our advances to run even more experiments or larger agents muting the benefits. Our research is fundamentally algorithmic and applied to simulation problems that bare little resemblance to reality. Nevertheless we urge users of our research to consider the impacts of applying our work to real-world problems and in other scopes.

\bibliography{main}
\bibliographystyle{rlj}

\beginSupplementaryMaterials

\section{Theory for expectile operators and action-values}\label{app_proof}

Throughout this section we will use the following expectile operator. We introduce the definition of the expectile with the influence function because it is convenient for the proofs. 
\begin{definition}
Define the \emph{expectile operator} for the random variable of returns $Z$ as 
\begin{align*}
    e_{\tau}(Z) \doteq \arg\min_\expectile \mathbb{E}[\ell_\tau(Z - \expectile)] \quad\quad \text{ for } \ell_\tau(\delta) \doteq \tau \mathbb{I}(\delta \ge 0)\delta^2 + (1 - \tau) \mathbb{I}(\delta < 0)\delta^2
\end{align*} 
This can be equivalently defined using the influence function 
\begin{equation*}
    \psi_\tau(x) \doteq 2\vert{}\tau - \mathbb{I}(x < 0)\vert{}x
\end{equation*} 
where the expectile value satisfies
\begin{align*}
     \mathbb{E}[\psi_\tau(Z - e_{\tau}(Z))] = 0
\end{align*} 
\end{definition}
There are several well-known properties of expectiles.
\begin{enumerate}
\item \textbf{[Uniqueness]} For any bounded $X$ ($|X|\le B$), there is a unique $m\in[-B,B]$ with $H_X(m)\doteq E[\psi_\tau(X-m)]=0$.
\item \textbf{[Translation invariance]} $e_\tau(X+c)=e_\tau(X)+c$ for any constant $c$. 
\item \textbf{[Monotonicity]} If $X\le Y$ a.s., then $e_\tau(X)\le e_\tau(Y)$.
\item \textbf{[Subadditivity]} For $\tau \ge 0.5$, $e_\tau(X + Y) \le e_\tau(X) + e_\tau(Y)$
\end{enumerate}

\subsection{n-step expectile values and Bellman operators}\label{app_bellman}

We can define the $n$-step expectile values and associated $n$-step expectile Bellman operator. 
Assume rewards are a.s. bounded, $|R|\le R_{\max}$. Let $\mathcal V=\{v:\mathcal S\to\mathbb R : \|v\|_\infty<\infty\}$ with $\|v\|_\infty=\sup_s|v(s)|$.
For $v\in\mathcal V$, $s\in\mathcal S$, $n \ge 1$, define the n-step bootstrapped return
\begin{equation*}
    G_s^{(n)}(v) \;\doteq\; \sum_{k=0}^{n-1}\gamma^k R_{k+1} \;+\;\gamma^n v(S_n)
\end{equation*}
where $S_0=s$ and $A_k\sim\pi(\cdot\mid S_k)$. Since $|G_s^{(n)}(v)|\le\frac{1-\gamma^n}{1-\gamma}R_{\max}+\gamma^n\|v\|_\infty$, it's a.s. bounded. $G_{s,a}^{(n)}(q)$ is defined analogously, assuming $(s,a)$ is given and we bootstrap with $q$. 
The n-step expectile Bellman operator is defined as
\begin{equation}
    (T_{\tau,\pi}^{(n)}v)(s) \doteq e_\tau\big(G_s^{(n)}(v)\big).
\end{equation}
We prove this has a unique fixed point, in Theorem \ref{thm_n_fixed}, giving the corresponding n-step expectile values 
\begin{align}
V^{(n)}_{\tau, \pi}(s) &\doteq  e_{\tau}(G_s^{(n)}(V^{(n)}_{\tau, \pi}))\\ 
Q^{(n)}_{\tau, \pi}(s,a) &\doteq  e_{\tau}(G_{s,a}^{(n)}(Q^{(n)}_{\tau, \pi}))
\end{align}
Note that for $n$ large enough, the $n$-step returns become Monte Carlo returns, and we define the limit of $n$ to be the expectile action-values and omit the $n$ rather than writing $n = \infty$:
\begin{align}
V_{\tau, \pi}(s) &\doteq  e_{\tau}(G_s(V_{\tau, \pi}))\\ 
Q_{\tau, \pi}(s,a) &\doteq  e_{\tau}(G_{s,a}(Q_{\tau, \pi}))
\end{align}

\begin{theorem} \label{thm_n_fixed}
For any $\tau\in(0,1)$, $n\ge1$: 
\begin{equation}
    \|T_{\tau,\pi}^{(n)}v_1-T_{\tau,\pi}^{(n)}v_2\|_\infty \le \gamma^n\|v_1-v_2\|_\infty
\end{equation} 
for all $v_1,v_2\in\mathcal V$. Therefore this operator has a unique fixed point $V_{\tau,\pi}^{(n)}$, reached from any starting $v_0 \in \mathcal V$. 
\end{theorem}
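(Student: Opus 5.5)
The plan is to prove the contraction by combining two of the listed expectile properties, monotonicity and translation invariance, and then to finish with the Banach fixed-point theorem on $(\mathcal V,\|\cdot\|_\infty)$.

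\textbf{Step 1: operator is well defined.} Fix $v_1,v_2\in\mathcal V$ and set $c\doteq\|v_1-v_2\|_\infty$. For any $v\in\mathcal V$ and any $s$, the bound already noted shows that $G^{(n)}_s(v)$ is a.s. bounded. By the uniqueness property, $e_\tau(G^{(n)}_s(v))$ therefore exists and is unique, and it lies in the corresponding bounded interval. Hence $T^{(n)}_{\tau,\pi}v\in\mathcal V$, with $\|T^{(n)}_{\tau,\pi}v\|_\infty\le \frac{1-\gamma^n}{1-\gamma}R_{\max}+\gamma^n\|v\|_\infty$.

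\textbf{Step 2: pathwise comparison.} Couple the two returns on the same trajectory $S_0=s,A_0,R_1,\dots,S_n$ generated by $\pi$ and the dynamics. The reward parts are identical, so
\begin{equation*}
G^{(n)}_s(v_1)-G^{(n)}_s(v_2)=\gamma^n\big(v_1(S_n)-v_2(S_n)\big)\in[-\gamma^n c,\gamma^n c]
\end{equation*}
almost surely. Thus $G^{(n)}_s(v_2)-\gamma^n c\le G^{(n)}_s(v_1)\le G^{(n)}_s(v_2)+\gamma^n c$ a.s.

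The only subtlety is here. The expectile is a functional of the law of the return, so we need $G^{(n)}_s(v_1)$ and $G^{(n)}_s(v_2)$ on a common probability space to make an a.s. comparison. Because both are measurable functions of the same trajectory distribution, this coupling is legitimate. This is also why we avoid subadditivity: it holds only for $\tau\ge 0.5$, whereas monotonicity plus translation invariance work for every $\tau\in(0,1)$.

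\textbf{Step 3: apply monotonicity and translation invariance.} Monotonicity gives
\begin{equation*}
e_\tau\big(G^{(n)}_s(v_2)-\gamma^n c\big)\le e_\tau\big(G^{(n)}_s(v_1)\big)\le e_\tau\big(G^{(n)}_s(v_2)+\gamma^n c\big).
\end{equation*}
Translation invariance pulls out the constants, so
\begin{equation*}
|(T^{(n)}_{\tau,\pi}v_1)(s)-(T^{(n)}_{\tau,\pi}v_2)(s)|\le\gamma^n c.
\end{equation*}
Taking the supremum over $s$ gives the claimed $\gamma^n$-contraction.

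\textbf{Step 4: fixed point.} The space $\mathcal V$ of bounded functions under the sup norm is complete, and $\gamma^n<1$. Banach's fixed-point theorem then gives a unique fixed point $V^{(n)}_{\tau,\pi}$. The iterates $v_{k+1}=T^{(n)}_{\tau,\pi}v_k$ converge to it from any $v_0\in\mathcal V$, at rate $\gamma^{nk}$.

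The action-value version $Q^{(n)}_{\tau,\pi}$ follows identically. We fix $(s,a)$ and bootstrap with $q$, so the difference of the returns is $\gamma^n\big(q_1(S_n,A_n)-q_2(S_n,A_n)\big)$, and Steps 2 through 4 go through unchanged.
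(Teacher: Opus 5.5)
Your proof is correct and follows essentially the same route as the paper: a pathwise bound $G_s^{(n)}(v_1)\le G_s^{(n)}(v_2)+\gamma^n\|v_1-v_2\|_\infty$ on a common trajectory, then monotonicity and translation invariance of $e_\tau$, then Banach's fixed-point theorem. The differences are minor: the paper gets the reverse inequality by swapping $v_1,v_2$ rather than writing the two-sided sandwich, and it leaves implicit your Step 1 check that $T^{(n)}_{\tau,\pi}$ maps $\mathcal V$ into itself.
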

\begin{proof}
Fix $v_1,v_2\in\mathcal V$, let $\Delta=\|v_1-v_2\|_\infty$, and fix $s\in\mathcal S$. Assume we see an n-step trajectory  $(S_0=s,A_0,R_1,S_1,\dots,A_{n-1},R_{n},S_n)$. Notice that 
$$v_1(S_n)\le v_2(S_n)+\Delta$$
because $\Delta$ bounds the gap at every state, including at $S_n$. Because they have the same n-step $\sum_{k=0}^{n-1}\gamma^k R_{k+1}$, we get
\begin{equation*}
    G_s^{(n)}(v_1)=\sum_{k=0}^{n-1}\gamma^k R_{k+1} +\gamma^n v_1(S_n) \;\le\; \sum_{k=0}^{n-1}\gamma^k R_{k+1} +\gamma^n\big(v_2(S_n)+\Delta\big) = G_s^{(n)}(v_2)+\gamma^n\Delta
    .
\end{equation*}
Using monotonicity and translation invariance, we get
\begin{align*}
    e_\tau\big(G_s^{(n)}(v_1)\big) &\le e_\tau\big(G_s^{(n)}(v_2)+\gamma^n\Delta\big) && \triangleright \text{ monotonicity of $e_\tau$}\\
    &= e_\tau\big(G_s^{(n)}(v_2)\big)+\gamma^n\Delta && \triangleright \text{ translation invariance of $e_\tau$}
\end{align*}
This holds for every $s\in\mathcal S$. 

Repeating the identical argument with $v_1,v_2$ swapped gives $(T_{\tau,\pi}^{(n)}v_2)(s)-(T_{\tau,\pi}^{(n)}v_1)(s)\le\gamma^n\Delta$ for every $s$. Putting this all together
\begin{equation*}
    \big|(T_{\tau,\pi}^{(n)}v_1)(s)-(T_{\tau,\pi}^{(n)}v_2)(s)\big|\le\gamma^n\Delta \quad \forall s \;\Rightarrow\; \|T_{\tau,\pi}^{(n)}v_1-T_{\tau,\pi}^{(n)}v_2\|_\infty\le\gamma^n\Delta
\end{equation*}
Because $T_{\tau,\pi}^{(n)}$ is a $\gamma^n$-contraction on the complete space $(\mathcal V,\|\cdot\|_\infty)$, by Banach's fixed-point theorem we know it has a unique fixed point $V_{\tau,\pi}^{(n)}$, reached from any starting $v_0\in \mathcal V$.
\end{proof}
This can all be analagously shown for the n-step expectile action-values, by analyzing the n-step Bellman operator for action-values.

\subsection{Ordered relationship between n-step values}\label{app_ordered}

It is important to note that unlike n-step values for the expected return (when $\tau = 0.5)$, these n-step values are not all equal to each other, even in the tabular setting. We can show they all converge under Bellman iteration (as per Theorem \ref{thm_n_fixed}), but they will likely converge to different values depending on the choice of $n$. 
For larger $n$, the values are smaller, because we are taking the expectile over longer trajectories where the sums can cancel out some of the stochasticity. Bootstrapping earlier, on expectile values, results in larger values. Nesting the expectiles takes expectiles over another expectile, which is larger, rather than just one expectile over a random trajectory. We show that there is a strict ordering in Theorem \ref{thm_ordering}. 

\begin{lemma}[Subtower Property]\label{lem_subtower}
For $\tau \ge 0.5$, and random variables $X, Y$
$e_\tau(Y) \le e_\tau(e_\tau(Y \mid X))$
\end{lemma}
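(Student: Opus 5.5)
The plan is to compare the two sides through the influence-function characterization in the Definition. Write $m \doteq e_\tau(e_\tau(Y\mid X))$ and $W \doteq e_\tau(Y\mid X)$, a function of $X$. By the uniqueness property, $H_Y(u)\doteq \mathbb{E}[\psi_\tau(Y-u)]$ is strictly decreasing in $u$ (since $\psi_\tau$ is strictly increasing), with unique zero $e_\tau(Y)$. It therefore suffices to show $H_Y(m)\le 0$, because then $e_\tau(Y)\le m$.

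The first step conditions on $X$. For each fixed $x$, subadditivity with a constant, which is just translation invariance, gives $e_\tau(Y-m\mid X=x) = W(x)-m$. So the conditional distribution of $D\doteq Y-m$ given $X$ has conditional expectile $W-m$.

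The key step is the pointwise inequality
\[
\mathbb{E}[\psi_\tau(D)\mid X] \le \psi_\tau\big(e_\tau(D\mid X)\big) \quad \text{for } \tau\ge 0.5 .
\]
That is, for any bounded random variable $Z$, one needs $\mathbb{E}[\psi_\tau(Z)]\le \psi_\tau(e_\tau(Z))$. The plan for proving this uses $\psi_\tau(z)=2\tau z^+ - 2(1-\tau)z^-$. This can be rewritten as
\[
\psi_\tau(z) = 2(1-\tau)z + 2(2\tau-1)z^+ ,
\]
which is convex when $\tau\ge 0.5$. Jensen's inequality therefore gives $\mathbb{E}[\psi_\tau(Z)]\ge \psi_\tau(\mathbb{E}Z)$, which points the wrong way. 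So a different identity is needed. The defining equation $\mathbb{E}[\psi_\tau(Z-e)]=0$ with $e=e_\tau(Z)$ gives
\[
\mathbb{E}[\psi_\tau(Z)] = \mathbb{E}[\psi_\tau(Z)-\psi_\tau(Z-e)] .
\]
Because $\psi_\tau$ is piecewise linear with slopes $2(1-\tau)\le 2\tau$, the increment $\psi_\tau(z)-\psi_\tau(z-e)$ lies between $2(1-\tau)e$ and $2\tau e$. That yields only $\mathbb{E}[\psi_\tau(Z)]\le 2\tau e$ when $e\ge 0$, and $\psi_\tau(e)=2\tau e$ there, so the bound works in that case. When $e<0$ the upper bound is $2(1-\tau)e=\psi_\tau(e)$, which also works. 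So for either sign of $e$, $\mathbb{E}[\psi_\tau(Z)]\le\psi_\tau(e_\tau(Z))$. Applying this conditionally with $Z=D$ gives $\mathbb{E}[\psi_\tau(Y-m)\mid X]\le \psi_\tau(W-m)$.

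The last step takes the outer expectation:
\[
H_Y(m)\le \mathbb{E}[\psi_\tau(W-m)] = 0 ,
\]
where the equality is exactly the definition of $m$ as the expectile of $W$. Monotonicity of $H_Y$ then gives $e_\tau(Y)\le m$.

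The main obstacle is the middle inequality $\mathbb{E}[\psi_\tau(Z)]\le\psi_\tau(e_\tau(Z))$. Convexity of $\psi_\tau$ naively suggests the reverse, so the argument must instead rest on the slope bounds $2(1-\tau)\le 2\tau$ of $\psi_\tau$, which is precisely where $\tau\ge 0.5$ enters. This step needs to be checked carefully, including the case analysis on the sign of $e$ and the fact that the increment is monotone in $e$. Measurability and boundedness of $W$ follow from boundedness of $Y$, and I would note this briefly.
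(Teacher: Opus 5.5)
Your proof is correct, and it takes a somewhat more elementary route than the paper. The paper sets $U=e_\tau(Y\mid X)$ and takes expectations of $\mathbb{E}[\psi_\tau(Y-U)\mid X]=0$ to get $e_\tau(Y-U)=0$. It then invokes subadditivity of the expectile functional for $\tau\ge 0.5$ as a black box: $e_\tau(Y)=e_\tau(U+(Y-U))\le e_\tau(U)+e_\tau(Y-U)=e_\tau(U)$.

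You never use subadditivity of $e_\tau$. Instead you fix $m=e_\tau(W)$ and prove $\mathbb{E}[\psi_\tau(Y-m)\mid X]\le\psi_\tau(W-m)$ directly from the slope ordering $2(1-\tau)\le 2\tau$. You then conclude from $H_Y(m)\le 0$ and strict monotonicity of $H_Y$. Your case analysis on the sign of $e$ checks out.

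The case analysis can be collapsed. For $\tau\ge 0.5$, $\psi_\tau$ is convex and positively homogeneous, hence sublinear, so $\psi_\tau(z)\le\psi_\tau(z-e)+\psi_\tau(e)$ pointwise. This is your key inequality before taking expectations. With $z=Y-m$ and $e=W-m$ it gives $\psi_\tau(Y-m)\le\psi_\tau(Y-W)+\psi_\tau(W-m)$. Taking expectations, $H_Y(m)\le\mathbb{E}[\psi_\tau(Y-W)]+\mathbb{E}[\psi_\tau(W-m)]=0+0$, using exactly the paper's identity $\mathbb{E}[\psi_\tau(Y-U)]=0$ and the definition of $m$.

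This sublinearity is precisely how subadditivity of expectiles is usually proved, so the two arguments share one mechanism. The paper's version is shorter and modular given the cited property. Yours is self-contained and makes explicit where $\tau\ge 0.5$ enters: the slope ordering, equivalently the convexity of $\psi_\tau$.
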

\begin{proof}
Let the random variable $U = e_\tau(Y \mid X)$.
By definition of the expectile, we have 
\begin{equation*}
\mathbb{E}[\psi_\tau(Y - U) \mid X] = \mathbb{E}[\psi_\tau(Y - e_\tau(Y \mid X)) \mid X] = 0
\end{equation*}
because $e_\tau(Y \mid X)$ is the expectile of the RV $Y \mid X$. 
We can show $\mathbb{E}[\psi_\tau(Y - U)] = 0$, implying the expectile value for the random variable $Y-U$ is 0, i.e., $e_\tau(Y - U) = 0$ because $\mathbb{E}[\psi_\tau(Y - U)] = \mathbb{E}[\psi_\tau(Y - U - m)]$ with $m = 0$. 
\begin{equation*}
\mathbb{E}[\psi_\tau(Y - U)] = \mathbb{E}[\mathbb{E}[\psi_\tau(Y - U) \mid X]] = \mathbb{E}[0] = 0
\end{equation*}
Using subadditivity of the expectile for $\tau \ge 0.5$, we get
\begin{equation*} 
e_\tau(Y) = e_\tau(U + (Y - U)) \le e_\tau(U) + e_\tau(Y - U) = e_\tau(U) = e_\tau(e_\tau(Y \mid X))
\end{equation*}
completing the proof.
\end{proof}

\begin{lemma}[Expectile Decomposition Lemma]\label{lemma_decomposition} For any integers $a, b \ge 1$ and any bounded value function $u \in \mathcal{V}$:
\begin{equation}
T_{\tau, \pi}^{(a+b)} u \le T_{\tau, \pi}^{(a)}\big(T_{\tau, \pi}^{(b)} u\big)
\end{equation}
\end{lemma}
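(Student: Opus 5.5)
We will prove the inequality pointwise at a fixed state $s$. The idea is to condition the $(a+b)$-step return on the first $a$ steps of the trajectory, and then combine the Subtower Property (Lemma \ref{lem_subtower}) with translation invariance and positive homogeneity of the expectile.

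\textbf{Step 1: split the return at time $a$.} Fix $s$ and write the $(a+b)$-step bootstrapped return as
\begin{equation*}
G_s^{(a+b)}(u) = \sum_{k=0}^{a-1}\gamma^k R_{k+1} + \gamma^a\Big(\sum_{k=0}^{b-1}\gamma^k R_{a+k+1} + \gamma^b u(S_{a+b})\Big).
\end{equation*}
Let $X$ denote the first segment of the trajectory, $(S_0,A_0,R_1,\dots,R_a,S_a)$. Conditioned on $X$, the first sum $C \doteq \sum_{k=0}^{a-1}\gamma^k R_{k+1}$ is a constant. By the Markov property, the bracketed term is distributed as $G_{S_a}^{(b)}(u)$, with the trajectory restarted from $S_a$ under $\pi$.

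\textbf{Step 2: compute the conditional expectile.} We need $e_\tau(cZ) = c\, e_\tau(Z)$ for $c \ge 0$. This follows directly from the influence-function characterization, since $\psi_\tau(cz) = c\,\psi_\tau(z)$ for $c>0$; the case $\gamma = 0$ is trivial. Combining this with translation invariance applied conditionally on $X$ gives
\begin{equation*}
e_\tau\big(G_s^{(a+b)}(u)\mid X\big) = C + \gamma^a e_\tau\big(G_{S_a}^{(b)}(u)\big) = C + \gamma^a (T^{(b)}_{\tau,\pi}u)(S_a).
\end{equation*}
The right-hand side is exactly the $a$-step bootstrapped return $G_s^{(a)}(T^{(b)}_{\tau,\pi}u)$, where $T^{(b)}_{\tau,\pi}u \in \mathcal V$ is bounded as in the proof of Theorem \ref{thm_n_fixed}.

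\textbf{Step 3: apply the Subtower Property.} For $\tau \ge 0.5$, Lemma \ref{lem_subtower} with $Y = G_s^{(a+b)}(u)$ gives
\begin{equation*}
(T^{(a+b)}_{\tau,\pi}u)(s) = e_\tau(Y) \le e_\tau\big(e_\tau(Y\mid X)\big) = e_\tau\big(G_s^{(a)}(T^{(b)}_{\tau,\pi}u)\big) = \big(T^{(a)}_{\tau,\pi}T^{(b)}_{\tau,\pi}u\big)(s).
\end{equation*}
This holds for every $s$, which proves the lemma. The statement does not say so explicitly, but the argument needs $\tau \ge 0.5$, as in the surrounding section.

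\textbf{Main obstacle.} The delicate step is Step 2: justifying that the conditional expectile given $X$ equals $T^{(b)}$ evaluated at $S_a$. This requires the Markov property, so that the conditional law of the tail given the whole prefix $X$ depends only on $S_a$. It also requires measurability of the conditional expectile. I would handle both by noting that the conditional expectile is defined through the conditional influence-function equation, which has a unique root by the uniqueness property. That root is a measurable function of $S_a$ alone and coincides with $e_\tau(G^{(b)}_{S_a}(u))$.
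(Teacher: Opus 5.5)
Your proposal is correct and follows the paper's proof essentially step for step. You split the $(a+b)$-step return at time $a$ and invoke the Subtower Property conditioning on the first $a$ steps. You then pull the prefix sum and $\gamma^a$ out of the inner expectile and use the Markov property to identify it with $(T^{(b)}_{\tau,\pi}u)(S_a)$. Your explicit positive-homogeneity argument, the measurability remark, and the observation that $\tau \ge 0.5$ is required are all things the paper uses only implicitly.
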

\begin{proof}
For simplicity of notation, we drop the subscripts and just write $T^a$ for the operators. 

By definition of the $(a+b)$-step operator:
\begin{equation*}
(T^{(a+b)} u)(s) = e_\tau\left( \sum_{j=0}^{a+b-1} \gamma^j R_{j+1} + \gamma^{a+b} u(S_{a+b}) \;\middle\vert{}\; S_0 = s \right)
\end{equation*}
If we split the discounted sum at step $a$ we get:
\begin{equation*}
(T^{(a+b)} u)(s) = e_\tau\left( \sum_{j=0}^{a-1} \gamma^j R_{j+1} + \gamma^a \left[ \sum_{i=0}^{b-1} \gamma^i R_{a+i+1} + \gamma^b u(S_{a+b}) \right] \;\middle\vert{}\; S_0 = s \right)
\end{equation*}
Let $\mathcal{F}_a$ denote the filtration (history) up to time step $a$. The tower property of expectiles with $\tau \ge 0.5$ states that for any random variables $X, Y$ we have $e_\tau(Y) \le e_\tau\big(e_\tau(Y \mid X)\big)$. Using this, we know that conditioning on $\mathcal{F}_a$ inside the expectile cannot decrease the value,
\begin{align*}
(T^{(a+b)} u)(s) &\le e_\tau\left( e_\tau\left( \sum_{j=0}^{a-1} \gamma^j R_{j+1} + \gamma^a \left[ \sum_{i=0}^{b-1} \gamma^i R_{a+i+1} + \gamma^b u(S_{a+b}) \right] \;\middle\vert{}\; \mathcal{F}_a \right) \;\middle\vert{}\; S_0 = s \right)\\
&= e_\tau\left( \sum_{j=0}^{a-1} \gamma^j R_{j+1} + \gamma^a e_\tau\left( \sum_{i=0}^{b-1} \gamma^i R_{a+i+1} + \gamma^b u(S_{a+b}) \;\middle\vert{}\; \mathcal{F}_a \right) \;\middle\vert{}\; S_0 = s \right)
\end{align*}
where the second step follows from that fact that
$\sum_{j=0}^{a-1} \gamma^j R_{j+1}$ and $\gamma^a$ are not random give $\mathcal{F}_a$ and so can be pulled out of the expectile (the inner one in this case). 
By the Markov property of the MDP, the inner expectile depends only on the state $S_a$:
$$e_\tau\left( \sum_{i=0}^{b-1} \gamma^i R_{a+i+1} + \gamma^b u(S_{a+b}) \;\middle\vert{}\; S_a \right) = (T^{(b)} u)(S_a)$$
Substituting this back into the outer expectile yields:
$$(T^{(a+b)} u)(s) \le e_\tau\left( \sum_{j=0}^{a-1} \gamma^j R_{j+1} + \gamma^a (T^{(b)} u)(S_a) \;\middle\vert{}\; S_0 = s \right) = \big(T^{(a)}(T^{(b)} u)\big)(s)$$
\end{proof}

\begin{theorem}\label{thm_ordering}
For any $\tau\ge1/2$, for every $n\in \mathbb{N}$ and multiplier $a \in \mathbb{N}$  ($n, a \ge 1$): 
\begin{equation}
    V_{\tau,\pi}^{(n)}(s)\ge V_{\tau,\pi}^{(a\cdot n)}(s)
\end{equation} 
for every $s$. 
\end{theorem}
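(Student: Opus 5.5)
The plan is to combine the Expectile Decomposition Lemma (Lemma \ref{lemma_decomposition}) with the monotonicity and contraction properties of the operators from Theorem \ref{thm_n_fixed}. Write $T^{(k)}$ for $T^{(k)}_{\tau,\pi}$ and $V^{(k)}$ for $V^{(k)}_{\tau,\pi}$. First, record that each $T^{(k)}$ is monotone: if $u\le w$ pointwise, then $G^{(k)}_s(u)\le G^{(k)}_s(w)$ almost surely. Monotonicity of $e_\tau$ then gives $T^{(k)}u\le T^{(k)}w$.

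The first step applies Lemma \ref{lemma_decomposition} repeatedly, by induction on $a$, to show that for any $u\in\mathcal V$,
\begin{equation*}
T^{(a n)}u \;\le\; \big(T^{(n)}\big)^a u .
\end{equation*}
For $a=1$ this is an equality. For the inductive step, take $a_1=n$ and $b=(a-1)n$ in the lemma to get $T^{(an)}u\le T^{(n)}\big(T^{((a-1)n)}u\big)$. The induction hypothesis gives $T^{((a-1)n)}u\le (T^{(n)})^{a-1}u$, and monotonicity of $T^{(n)}$ then yields $T^{(n)}\big(T^{((a-1)n)}u\big)\le (T^{(n)})^{a}u$.

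The second step instantiates this at $u=V^{(n)}$. Since $V^{(n)}$ is the fixed point of $T^{(n)}$, we have $(T^{(n)})^a V^{(n)}=V^{(n)}$, so $T^{(an)}V^{(n)}\le V^{(n)}$. Thus $V^{(n)}$ is a supersolution of the $(an)$-step operator. By monotonicity of $T^{(an)}$, iterating gives a pointwise nonincreasing sequence
\begin{equation*}
V^{(n)} \;\ge\; T^{(an)}V^{(n)} \;\ge\; \big(T^{(an)}\big)^2 V^{(n)} \;\ge\; \cdots .
\end{equation*}
By Theorem \ref{thm_n_fixed}, $T^{(an)}$ is a $\gamma^{an}$-contraction in $\|\cdot\|_\infty$, so this sequence converges uniformly to the unique fixed point $V^{(an)}$. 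Pointwise inequalities are preserved under this limit, so $V^{(an)}(s)\le V^{(n)}(s)$ for every $s$.

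The main obstacle is making Lemma \ref{lemma_decomposition} apply cleanly with its precise hypotheses. The lemma relies on the subtower property, which needs $\tau\ge 1/2$; this is exactly the theorem's assumption. It also uses the Markov reduction of the inner conditional expectile to $(T^{(b)}u)(S_a)$, and I will take that reduction as given from the lemma. Beyond this, the only care needed is to check that monotonicity of $T^{(n)}$ holds for arbitrary bounded $u$, not just for fixed points, so that the induction goes through. Both follow directly from the listed expectile properties.
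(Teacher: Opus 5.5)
Your proof is correct and follows the paper's argument. Both use the same induction via Lemma \ref{lemma_decomposition} and monotonicity to get $T^{(an)}u \le (T^{(n)})^a u$, then close with a monotone-contraction limit. The only difference is which side you iterate from: the paper instantiates at $u=V^{(an)}_{\tau,\pi}$ and iterates $W=(T^{(n)})^a$ upward to $V^{(n)}_{\tau,\pi}$, whereas you instantiate at $u=V^{(n)}_{\tau,\pi}$ and iterate $T^{(an)}$ downward to $V^{(an)}_{\tau,\pi}$, which spares you identifying the fixed point of $W$.
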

\begin{proof}
Pick any $n\in \mathbb{N}$ and any multiplier $a \in \mathbb{N}$. As in Lemma \ref{lemma_decomposition}, to simplify notation for this proof, we write $T^{(n)}$ instead of $T_{\tau, \pi}^{(n)}$, because it is clear here that we are taking about the expectile Bellman operators.

\textbf{Step 1:}
We first show that 
$$T^{(a \cdot m)} u \le T^{(a)}\big(T^{(a(m-1))} u\big) \le \big(T^{(a)}\big)^2\big(T^{(a(m-2))} u\big) \le \dots \le \big(T^{(a)}\big)^m u$$
We can show this using induction. 
For $m=1$, the statement is trivial: $T^{(a)} u \le (T^{(a)})^1 u$.

For $m=2$, we set $p = a$ and $q = a$ and apply the expectile decomposition lemma (Lemma \ref{lemma_decomposition}) directly to $u$:
\begin{equation*}
T^{(2a)} u = T^{(a+a)} u \le T^{(a)}\big(T^{(a)} u\big) = \big(T^{(a)}\big)^2 u
\end{equation*}

For $m=3$
\begin{equation*}
T^{(3a)} u = T^{(a+2a)} u \le T^{(a)}\big(T^{(2a)} u\big)
\end{equation*}
For $m=2$ case we already established that $T^{(2a)} u \le \big(T^{(a)}\big)^2 u$, and because the operator $T^{(a)}$ is monotone, is preserves order we can apply $T^{(a)}$ to both sides without changing the inequality:
\begin{equation*}
T^{(a)}\Big( T^{(2a)} u \Big) \le T^{(a)}\left( \big(T^{(a)}\big)^2 u \right) = \big(T^{(a)}\big)^3 u
\end{equation*}
giving
$T^{(3a)} u \le T^{(a)}\big(T^{(2a)} u\big) \le \big(T^{(a)}\big)^3 u$.

For the general inductive step $k \ge 1$, assume $T^{(ak)} u \le \big(T^{(a)}\big)^k u$.
To prove the step for $k+1$, we split the $(a(k+1))$-step horizon into $p = a$ and $q = ak$ using decomposition lemma
\begin{equation*}
T^{(a(k+1))} u = T^{(a + ak)} u \le T^{(a)}\big(T^{(ak)} u\big)
\end{equation*}
 and use $T^{(ak)} u \le (T^{(a)})^k u$ and the same monotone operator argument to get
\begin{equation*}
T^{(a(k+1))} u \le T^{(a)}\big(T^{(ak)} u\big) \le T^{(a)}\left( \big(T^{(a)}\big)^k u \right) = \big(T^{(a)}\big)^{k+1} u
\end{equation*}

\textbf{Step 2:} Next we use monotonicity of the operator $T^{(n)}$ and iteration to get the desired upper bound.

Define the composite operator $W \doteq \big(T^{(n)}\big)^a$. Because $T^{(n)}$ is an order-preserving (monotone) $\gamma^n$-contraction on the complete metric space $(\mathcal{V}, \Vert{}\cdot\Vert{}_\infty)$, the operator $W$ is a monotone $\gamma^{a\cdot n}$-contraction mapping.
Moreover, the unique fixed point of $W$ is $V_{\tau,\pi}^{(n)}$, because $T^{(n)} V_{\tau,\pi}^{(n)} = V_{\tau,\pi}^{(n)}$. Applying the operator at the fixed point $a$ more times stays at the fixed point, so  $W V_{\tau,\pi}^{(n)} = V_{\tau,\pi}^{(n)}$.

Because $W$ is order-preserving, we can apply it repeatedly to both sides of $V_{\tau,\pi}^{(a \cdot n)} \le W V_{\tau,\pi}^{(a \cdot n)}$ without flipping the inequality
\begin{equation*}
V_{\tau,\pi}^{(a \cdot n)} \le W V_{\tau,\pi}^{(a \cdot n)}\le W^2 V_{\tau,\pi}^{(a \cdot n)} \le \dots \le W^k V_{\tau,\pi}^{(a \cdot n)}
\end{equation*}
Since $W$ is a contraction mapping, the sequence $W^k V_{\tau,\pi}^{(a \cdot n)}$ converges uniformly to the unique fixed point of $W$ as $k \to \infty$. Therefore
\begin{equation*}
V_{\tau,\pi}^{(a \cdot n)} \le \lim_{k\to\infty} W^k V_{\tau,\pi}^{(a \cdot n)} = V_{\tau,\pi}^{(n)} 
\end{equation*}
completing the proof.
\end{proof}

\subsection{Convergence under policy iteration}\label{app_theory_pi}

In this section, we consider policy iteration using n-step expectile values. This is trickier than the standard setting with $\tau = 0.5$, because of the nonlinearity in the expectile. 
Policy iteration algorithms with the n-step expectile action-values can follow a similar alternating approach: we estimate $Q^{(n)}_{\tau, \pi}$ and then pick a new policy that is greedier wrt to $Q^{(n)}_{\tau, \pi}$ (puts higher probability on higher-valued actions). We want to show convergence to the optimal policy under a general class of greedification updates for $\pi$. 
One general condition that allows us to show convergence is given in the below definition. 
\begin{definition}[Monotonic Expectile Greedification]\label{def_exp_greedy}
Given a policy $\pi$ a new policy $\pi'$ is a \textbf{monotonic expectile greedy} policy if it satisfies 
\begin{align}\label{eq_monotonic}
(T_{\tau, \pi'}^{(n)} V_{\tau,\pi}^{(n)} (s) \ge V_{\tau,\pi}^{(n)}(s) 
\end{align}
for all $s$, with strict improvement if the inequality is strict for at least one $s$. 
\end{definition}
The first term $(T_{\tau, \pi'}^{(n)} V_\pi)$ is the expectile value for following policy $\pi'$ for $n$ steps and then bootstrapping from $V_\pi$ (as if following $\pi$ afterwards). We will see that this condition allows us to show $\pi'$ has a higher expectile value than $\pi$. 
Interestingly, we can see this is a generalization of the standard condition for expected returns ($\tau = 0.5$): $\pi'$ gives policy improvement if $\sum_{a \in \mathcal{A}} \pi'(a|s) Q_\pi(s,a) \ge \sum_{a \in \mathcal{A}} \pi(a|s) Q_\pi(s,a) = V_\pi(s)$. This is because, for $\tau = 0.5$, we have $(T_{\pi'} V_\pi)(s) = \sum_{a \in \mathcal{A}} \pi'(a|s) [ r(s,a) + \gamma \mathbb{E}[V_\pi(s')|s,a] = \sum_{a \in \mathcal{A}} \pi'(a|s) Q_\pi(s,a)$. We do not typically write the policy improvement condition using the Bellman operator $T_{\pi'}$, but it is equivalent. 

The monotonic expectile greedy condition for $\tau \neq 0.5$, however, is harder to interpret since it relies on the stochastic returns $G_s^{(n)}$ rather than the learned action-values $Q_{\tau, \pi}^{(n)}(s,a)$. In general, the condition usually used for expected returns---namely that $\sum_a \pi'(a|s) Q_\pi(s,a) \ge \sum_a \pi(a|s) Q_\pi(s,a)$---does not work for expectile values. Ideally, we would show that standard greedification approaches on $Q_{\tau, \pi}^{(n)}(s,a)$ produce a monotonic expectile greedy policy, even if they are not directly set to satisfy the condition. Unfortunately, 
this condition 
is not always satisfied by the standard greedification approaches, like softmax, $\epsilon$-greedy or $\epsilon$-greedy with a decay. We hypothesize it holds under using an $\epsilon$-greedy policy with additional assumptions on the size of $\epsilon$ and gap between action-values, but leave this verification for future work.

We assume a compact policy space $\Pi$ that can be a subset of all possible policies (e.g, it could be the set of soft-policies which have minimum probability of $\epsilon/|\mathcal{A}|$ on each action). For our convergence result, we need the policy class to be closed under the greedification operation. 

\begin{assumption}\label{assump_pclass}
The policy class $\Pi$ is compact and is closed under the greedification operator: for $\pi \in \Pi$, the policy greedification operator $\pi' = T(\pi)$ satisfies Definition \ref{def_exp_greedy} and has $\pi' \in \Pi$. 
\end{assumption}

\begin{theorem}\label{thm_main}
Assume the MDP has bounded rewards and that Assumption \ref{assump_pclass} is satisfied. Assume we start from any initial policy $\pi_0 \in \Pi$ and iteratively compute monotonic expectile greedy policies $\pi_{t+1} = T(\pi_t)$.  
Then $\pi_t$ as $t \rightarrow \infty$ converges to the optimal expectile policy $\pi^*_\tau \in \Pi$ (i.e., $V^{(n)}_{\tau, \pi^*_\tau}(s) \ge V^{(n)}_{\tau, \pi}(s)$ for all $s$ for any $\pi \in \Pi$. 
\end{theorem}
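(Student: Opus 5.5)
The plan has two parts: show that the values increase monotonically along the iteration, then use compactness of $\Pi$ together with the fixed-point characterization of Theorem \ref{thm_n_fixed} to identify the limit.

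\textbf{Step 1 (policy improvement).} Fix $t$, write $\pi=\pi_t$, $\pi'=\pi_{t+1}$ and $V=V^{(n)}_{\tau,\pi}$. Definition \ref{def_exp_greedy} gives $T^{(n)}_{\tau,\pi'}V\ge V$ pointwise. By Theorem \ref{thm_n_fixed}, $T^{(n)}_{\tau,\pi'}$ is a $\gamma^n$-contraction. It is also order preserving: if $v_1\le v_2$ then $G^{(n)}_s(v_1)\le G^{(n)}_s(v_2)$ a.s., and expectiles are monotone. Applying the operator repeatedly to $V\le T^{(n)}_{\tau,\pi'}V$ gives
\[
V\le T^{(n)}_{\tau,\pi'}V\le (T^{(n)}_{\tau,\pi'})^2V\le\cdots .
\]
This is the same argument as Step 2 of Theorem \ref{thm_ordering}. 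The sequence converges uniformly to the unique fixed point $V^{(n)}_{\tau,\pi'}$, so $V^{(n)}_{\tau,\pi_{t+1}}\ge V^{(n)}_{\tau,\pi_t}$. If the greedification inequality is strict at some $s$, the improvement is strict at that $s$.

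\textbf{Step 2 (convergence of values).} The sequence $V_t\doteq V^{(n)}_{\tau,\pi_t}$ is pointwise nondecreasing. It is also uniformly bounded, since $\|V^{(n)}_{\tau,\pi}\|_\infty\le R_{\max}/(1-\gamma)$ follows from the bound on $G^{(n)}_s$ and the fixed-point equation. Hence $V_t\to V_\infty$ pointwise. By compactness of $\Pi$, there is a subsequence $\pi_{t_k}\to\bar\pi\in\Pi$.

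The next task is continuity of $\pi\mapsto V^{(n)}_{\tau,\pi}$. The map $\pi\mapsto T^{(n)}_{\tau,\pi}v$ is continuous for fixed $v$, because the law of $G^{(n)}_s(v)$ depends continuously on $\pi$ and the expectile is continuous in the distribution for bounded variables (via the zero of $H_X$). The contraction is uniform in $\pi$, so a standard perturbation bound for fixed points transfers this continuity to the fixed points. It follows that $V_\infty=V^{(n)}_{\tau,\bar\pi}$.

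Continuity also gives $T(\pi_{t_k})=\pi_{t_k+1}$, whose values tend to the same $V_\infty$. From this I would argue that $\bar\pi$ admits no strict monotonic expectile improvement. Otherwise, applying Step 1 at $\bar\pi$ would produce a value strictly above $V_\infty$ at some state. Nearby policies $\pi_{t_k}$ would then also admit improvements exceeding $V_\infty$ there, contradicting monotone convergence. This step needs continuity (or at least a closed graph) of the greedification operator $T$. Assumption \ref{assump_pclass} does not state this explicitly, so I would read it as part of "closed under greedification" or add it.

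\textbf{Step 3 (optimality), the main obstacle.} It remains to show that a policy $\bar\pi$ with no strict improvement satisfies $V^{(n)}_{\tau,\bar\pi}\ge V^{(n)}_{\tau,\pi}$ for every $\pi\in\Pi$. In the expected-return case this follows because the greedy policy dominates every $\pi$ in the one-step comparison. Here I would try the following. Suppose $V^{(n)}_{\tau,\pi}(s)>V^{(n)}_{\tau,\bar\pi}(s)$ for some $\pi\in\Pi$ and $s$. I would try to show that $T^{(n)}_{\tau,\pi}V^{(n)}_{\tau,\bar\pi}\not\le V^{(n)}_{\tau,\bar\pi}$, by contraposition of the monotone-iteration argument of Step 1 run in the reverse direction. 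That is, if $T^{(n)}_{\tau,\pi}V_{\bar\pi}\le V_{\bar\pi}$, iterating gives $V_\pi\le V_{\bar\pi}$. So some state admits a strict one-step improvement with respect to $\bar\pi$. The delicate point is turning this pointwise inequality into a policy that satisfies Definition \ref{def_exp_greedy} at all states, i.e.\ no loss anywhere. I would try a state-wise mixture that uses $\pi$ where it improves and $\bar\pi$ elsewhere. This requires the expectile at $s$ to depend only on the action choice at $s$ and later states, which the nonlinearity makes nontrivial, and also requires $\Pi$ to be closed under such mixing. I expect this step to need the greedification $T$ to be a genuine maximizer of $T^{(n)}_{\tau,\cdot}V_\pi$ within $\Pi$, which is how I would interpret Assumption \ref{assump_pclass} to close the argument.
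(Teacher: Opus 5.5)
Your Step 1 matches the paper's argument. $T^{(n)}_{\tau,\pi'}$ is a monotone $\gamma^n$-contraction, so iterating $V^{(n)}_{\tau,\pi}\le T^{(n)}_{\tau,\pi'}V^{(n)}_{\tau,\pi}$ and passing to the fixed point gives $V^{(n)}_{\tau,\pi}\le V^{(n)}_{\tau,\pi'}$. The paper then finishes in one sentence: since the values are nondecreasing and bounded, the iteration ``must eventually stop'' at a policy with the largest value.

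Your Steps 2--3 try to justify that sentence, and the obstacles you hit are real, not artifacts of your route. So your proposal does not prove the theorem, but not because you missed an idea the paper has. As literally stated, the theorem is false, and the paper's last sentence covers exactly the gaps you found. Bounded monotone values converge but need not stop, and $\pi_t$ need not converge at all. If $V^{(n)}_{\tau,\pi}=V^{(n)}_{\tau,\pi'}$, then each of $\pi,\pi'$ is an admissible greedification of the other, so $T$ may alternate between them. More decisively, Definition \ref{def_exp_greedy} is a non-strict condition. Hence $T=\mathrm{id}$ satisfies Assumption \ref{assump_pclass}, because $T^{(n)}_{\tau,\pi}V^{(n)}_{\tau,\pi}=V^{(n)}_{\tau,\pi}$, and then $\pi_t\equiv\pi_0$. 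No argument can reach optimality without a stronger hypothesis on $T$, which is what your Step 3 concludes it needs.

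Your proposed strengthening closes the argument: require that $T(\pi)$ maximize $T^{(n)}_{\tau,\cdot}V^{(n)}_{\tau,\pi}$ pointwise over $\Pi$. It also does so more cheaply than you plan, since you then do not need continuity of $T$. Write $V_t=V^{(n)}_{\tau,\pi_t}$.
\begin{itemize}
\item Step 1 gives $V_{t+1}\ge T^{(n)}_{\tau,\pi_{t+1}}V_t\ge T^{(n)}_{\tau,\pi}V_t$ for every $\pi\in\Pi$.
\item Letting $t\to\infty$ (bounded monotone convergence inside the expectile) gives $T^{(n)}_{\tau,\pi}V_\infty\le V_\infty$.
\item Your reverse iteration then yields $V^{(n)}_{\tau,\pi}\le V_\infty$, so $V_\infty=\sup_{\pi\in\Pi}V^{(n)}_{\tau,\pi}$.
\item Compactness plus continuity of $\pi\mapsto V^{(n)}_{\tau,\pi}$ gives attainment at a limit point.
\end{itemize}

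The catch is the one you anticipated. For $n=1$ and a state-wise (rectangular) $\Pi$, the maximizer exists: optimize each $\pi(\cdot\mid s)$ separately. This is also the case where your state-wise mixture argument works. For $n\ge2$ it need not exist, and in fact no pointwise-optimal policy need exist. Take $n=2$ and $\tau=0.7$, with these dynamics:
\begin{itemize}
\item From $s_A$, under any action, move to $c$ with reward $0$.
\item From $s_B$, under any action, move to $c$ with reward $+10$ or to $c'$ with reward $-10$, w.p.\ $1/2$ each.
\item At $c$ and $c'$, action $a_1$ pays $1$ and $a_2$ pays $\pm3$ w.p.\ $1/2$; then the episode terminates.
\end{itemize}
For $\gamma\to1$ (the orderings persist for all $\gamma\in(0,1)$), $s_A$ prefers $a_2$ at $c$, with value $1.2$ versus $1$. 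From $s_B$, using $a_1$ at both $c$ and $c'$ gives $5$. Any policy using $a_2$ at $c$ gets at most $4.3$, because the $\pm10$ noise absorbs the expectile bonus of the gamble. Randomizing only interpolates, since $\mathbb{E}[\psi_\tau(X-m)]$ is linear in the law of $X$. The result therefore needs extra structure, such as $n=1$ with rectangular $\Pi$, or deterministic dynamics as in Corollary \ref{cor_optimal}.
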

\begin{proof}
    At time step $t$, we have some policy $\pi_t$. By Theorem \ref{thm_n_fixed}, we know that $T^{(n)}_{\tau, \pi'}$ is a monotone contraction with unique fixed point $V^{(n)}_{\tau,\pi'}$. Therefore, $T^{(n)}_{\tau, \pi'}$ is order-preserving. We assumed that $\pi'$ is a monotonic expectile greedy policy, and so $ V^{(n)}_{\tau,\pi} \le T_{\tau, \pi'}^{(n)} V^{(n)}_{\tau,\pi}$. We can apply $T^{(n)}_{\tau, \pi'}$ repeatedly to both sides, and preserve the order: 
\begin{equation*}
V^{(n)}_{\tau,\pi} \le T_{\tau, \pi'}^{(n)} V^{(n)}_{\tau,\pi} \le (T_{\tau, \pi'}^{(n)})^2 V^{(n)}_{\tau,\pi} \le \ldots \le (T_{\tau, \pi'}^{(n)})^k V^{(n)}_{\tau,\pi} \le \ldots
\end{equation*}
Because $T^{(n)}_{\tau, \pi'}$ is a contraction, $(T_{\tau, \pi'}^{(n)})^k V^{(n)}_{\tau,\pi} \rightarrow V^{(n)}_{\tau,\pi'}$ as $k \rightarrow \infty$. Taking this limit gives $V^{(n)}_{\tau,\pi} \le V^{(n)}_{\tau,\pi'}$.

    Iteratively applying this monotonic policy improvement must eventually stop, because $V^{(n)}_{\tau, \pi}$ is upper bounded because the rewards are bounded and the value function space is bounded. Therefore, this procedure will stop at some $\pi^*_\tau \in \Pi$ that has the largest $V^{(n)}_{\tau, \pi^*_\tau(s)}$. 
\end{proof}

\begin{corollary}\label{cor_optimal}
If the MDP is deterministic (deterministic transitions and rewards) and $\Pi$ includes all deterministic policies, including the optimal policy $\pi^*$ (highest expected return), then $\pi^*_\tau = \pi^*$.  
\end{corollary}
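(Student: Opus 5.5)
The corollary asks that, under deterministic dynamics and rewards, the optimal expectile policy $\pi^*_\tau$ from Theorem \ref{thm_main} coincides with the expected-return optimal policy $\pi^*$. The plan is to show that for every \emph{deterministic} policy the expectile is a mean of a point mass, so expectile values equal ordinary values. We then argue that stochastic policies in $\Pi$ cannot beat the best deterministic one in expectile value. The claim is then read in value terms: $\pi^*_\tau$ and $\pi^*$ achieve the same values $V^{(n)}_{\tau,\pi^*_\tau}=V_{\pi^*}=V^*$, with a deterministic maximizer available.

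\textbf{Step 1 (deterministic policies).} Fix a deterministic MDP and a deterministic policy $\pi$. Then $G_s^{(n)}(v)$ is a constant for every $s$ and $v$. For a constant $c$ we have $e_\tau(c)=c$: by translation invariance, $e_\tau(c)=e_\tau(0)+c$, and $e_\tau(0)=0$ since $\psi_\tau(0)=0$. Hence $T^{(n)}_{\tau,\pi}$ coincides with the standard $n$-step Bellman operator $T^{(n)}_\pi$. By uniqueness of fixed points (Theorem \ref{thm_n_fixed}), $V^{(n)}_{\tau,\pi}=V_\pi$, the ordinary value function. In particular $V^{(n)}_{\tau,\pi^*}=V^*$.

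\textbf{Step 2 (stochastic policies do not exceed $V^*$).} Any $\pi\in\Pi$ may be stochastic, and then the expectile with $\tau>0.5$ can exceed the mean, so we must show $V^{(n)}_{\tau,\pi}\le V^*$. Apply monotonicity of $e_\tau$ along each trajectory. Since the dynamics are deterministic, every realized $n$-step return satisfies
\[
\sum_{k=0}^{n-1}\gamma^k R_{k+1}+\gamma^n V^*(S_n)\le V^*(s).
\]
This holds because the action sequence taken is one feasible deterministic sequence, and $V^*$ is the maximum over action sequences: by the optimal Bellman equation for deterministic systems, any $n$ actions followed by optimal play give at most $V^*(s)$. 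So $G^{(n)}_s(V^*)\le V^*(s)$ a.s. Monotonicity together with the constant property then gives $T^{(n)}_{\tau,\pi}V^*\le V^*$. Iterating with the monotone contraction $T^{(n)}_{\tau,\pi}$, exactly as in the proof of Theorem \ref{thm_main} but with the inequality reversed, yields $V^{(n)}_{\tau,\pi}=\lim_k (T^{(n)}_{\tau,\pi})^k V^*\le V^*$.

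\textbf{Step 3 (conclusion).} Since $\pi^*\in\Pi$, Step 1 shows $V^*$ is attained within $\Pi$, and Step 2 shows nothing in $\Pi$ exceeds it. Therefore the expectile-optimal value is $V^*$, and the limit $\pi^*_\tau$ of Theorem \ref{thm_main} satisfies $V^{(n)}_{\tau,\pi^*_\tau}\ge V^{(n)}_{\tau,\pi^*}=V^*$, hence equals $V^*$.

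The main obstacle is the identification $\pi^*_\tau=\pi^*$ as policies rather than values. If $\pi^*_\tau$ is stochastic with $V^{(n)}_{\tau,\pi^*_\tau}=V^*$, we need that it places mass only on optimal actions. For $\tau\in(0,1)$, $e_\tau(X)=\max X$ forces $X$ to be a.s. constant, because both weights in $\psi_\tau$ are positive. So every realized return equals $V^*(s)$, which means $\pi^*_\tau$ is supported on optimal actions and is therefore an optimal policy. I would state the result modulo this equivalence, or assume a unique optimal policy, to conclude literal equality.
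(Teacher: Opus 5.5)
Your proof is correct, but it takes a different and more careful route than the paper's. The paper asserts without justification that $\pi^*_\tau$ is deterministic. It then uses the same observation as your Step 1 (for a single trajectory the expectile equals the mean, so $Q^{(n)}_{\tau,\pi^*_\tau}=Q_{0.5,\pi^*_\tau}$). Finally it argues by contradiction: if $\pi^*_\tau$ were suboptimal, one could greedify to a strictly better deterministic policy in $\Pi$, contradicting expectile-optimality.

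The weak point there is the determinism claim. For $\tau>0.5$ a stochastic policy could a priori have expectile value above its mean, and nothing in the paper rules out such a policy beating $V^*$. Your Step 2 closes exactly this gap. The pathwise bound $G^{(n)}_s(V^*)\le V^*(s)$ genuinely uses deterministic dynamics. Together with monotonicity of $e_\tau$ and the monotone-contraction iteration, it gives $V^{(n)}_{\tau,\pi}\le V^*$ for every $\pi\in\Pi$, so the expectile-optimal value is exactly $V^*$.

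Your final identification step is also sound. To make ``supported on optimal actions'' explicit, telescope:
$V^*(s)-G^{(n)}_s(V^*)=\sum_{k=0}^{n-1}\gamma^k\big(V^*(S_k)-Q^*(S_k,A_k)\big)$. Each term is nonnegative, so a.s.\ vanishing of the sum forces the $k=0$ term to vanish at every start state $s$.

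What your route buys over the paper's:
\begin{itemize}
\item It needs only $\pi^*\in\Pi$, not all deterministic policies in $\Pi$.
\item It shows that the expectile-optimal policies in $\Pi$ are exactly those supported on $Q^*$-greedy actions.
\item It therefore makes precise that ``$\pi^*_\tau=\pi^*$'' holds up to ties among optimal actions, with literal equality only when the optimal policy is unique. The paper's statement and proof leave this point implicit.
\end{itemize}
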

\begin{proof}
    In a deterministic environment with $\Pi$ containing deterministic policies, $\pi^*_\tau$ will be deterministic (converge to following a deterministic optimal path) and the set of returns is composed of this one path. Therefore, we can also conclude that $Q^{(n)}_{\tau, \pi_\tau^*}(s,a) = Q_{\tau=0.5,\pi_\tau^*}(s,a)$ because the mean and expectiles are the same for a single return. If $\pi^*_\tau$ were not optimal, then we could greedify to get a better policy $\pi'(s)$, which would contradict the fact that there is no policy $\pi'$ that has a strictly better expectile action-value.   
\end{proof}

Note that if we used a strictly greedy policy in this deterministic setting, there is only one trajectory (no stochasticity) and the expectile operator is equivalent to the mean operator. Therefore, the above policy improvement result is trivially true if we use greedy policies, which is why we opt to show the result more generally for potentially stochastic policies. 

The above corollary shows that doing policy improvement with expectile action-values eventually converges to the optimal policy in a deterministic setting. It does not, however, guarantee monotonic improvement in terms of the expected return. In fact, this procedure does not have monotonic improvement, because greedifying wrt the expectile action-values can actually produce a policy with reduced expected returns. We show this in the next proposition. %
\begin{proposition}
There exists a deterministic MDP and $\tau > 0.5$ where $\pi_1, \pi_2$ satisfy the conditions of Theorem \ref{thm_main} and we have $V_{\tau, \pi_1}(s) \le V_{\tau, \pi_2}(s)$ for all $s$, but where $V_{\pi_1}(s) > V_{\pi_2}(s)$ for some $s$.
\end{proposition}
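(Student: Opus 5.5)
The proposition is existential, so I will prove it with an explicit counterexample. The MDP is deterministic, but the policies are stochastic, so the only randomness in the return comes from the action choices. I will take a single start state $s_0$ with two actions. Action $L$ leads deterministically to a terminal state with reward $c$. Action $R$ leads to a state $s_1$, where there is again a choice: action $H$ gives reward $h$ and terminates, and action $B$ gives reward $b<h$ and terminates. All other states are absorbing with zero reward. Policies differ only in the probabilities they put on these actions. I will take $n$ large (Monte Carlo, matching $V_{\tau,\pi}$ without superscript), and also check the case $n=1$, which differs only by nesting.

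The key mechanism is that the expectile with $\tau>0.5$ rewards spread. Take $\pi_1$ deterministic: it always plays $L$ at $s_0$, with return $c$. Then $V_{\pi_1}(s_0)=V_{\tau,\pi_1}(s_0)=c$. Take $\pi_2$ stochastic: at $s_0$ it plays $R$, and at $s_1$ it plays $H$ with probability $p$ and $B$ with probability $1-p$. The return at $s_0$ is then a two-point variable taking value $\gamma h$ with probability $p$ and $\gamma b$ with probability $1-p$. Its expectile solves $\tau p(\gamma h-u)=(1-\tau)(1-p)(u-\gamma b)$, so
\[
u=\gamma\,\frac{\tau p h+(1-\tau)(1-p)b}{\tau p+(1-\tau)(1-p)} .
\]
I will pick numbers such as $\gamma=1$ (or close to it), $h=1$, $b=0$, $p=0.2$, $\tau=0.9$. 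These give $u=0.18/0.26\approx0.69$, while the mean is $0.2$. Choosing $c=0.5$ then gives $V_{\tau,\pi_2}(s_0)>V_{\tau,\pi_1}(s_0)$ but $V_{\pi_2}(s_0)<V_{\pi_1}(s_0)$.

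Next I will check the inequality at every state. At $s_1$, $\pi_1$'s action choice there is irrelevant to $s_0$, so I will set $\pi_1=\pi_2$ at $s_1$ and get equality there. Terminal states are equal trivially. This gives $V_{\tau,\pi_1}\le V_{\tau,\pi_2}$ everywhere and $V_{\pi_1}(s_0)>V_{\pi_2}(s_0)$.

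The main obstacle is the phrase ``satisfy the conditions of Theorem~\ref{thm_main}''. I will interpret it as: both policies lie in a compact class $\Pi$ (the policies parameterized by the action probabilities at $s_0$ and $s_1$), and $\pi_2$ is a monotonic expectile greedy step from $\pi_1$ in the sense of Definition~\ref{def_exp_greedy}. For the $n=1$ check, the condition is $T_{\tau,\pi_2}V_{\tau,\pi_1}\ge V_{\tau,\pi_1}$. At $s_0$ this reduces to $\gamma V_{\tau,\pi_1}(s_1)\ge c$, where $V_{\tau,\pi_1}(s_1)$ is the expectile at $s_1$, about $0.69$ with the numbers above. So the condition holds for $c=0.5$. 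At $s_1$ it holds with equality, since the two policies agree there. If $\Pi$ must be closed under greedification, I will restrict $\Pi$ to $\{\pi_1,\pi_2\}$, with the stochasticity at $s_1$ fixed, and define $T(\pi_1)=T(\pi_2)=\pi_2$. This is a finite, hence compact, class, and it satisfies Assumption~\ref{assump_pclass}.
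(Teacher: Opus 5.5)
Your proposal is correct and is essentially the paper's construction: a two-step deterministic MDP in which $\pi_1$ takes a deterministic terminating action at $s_0$, while $\pi_2$ moves to $s_1$, where both policies share a stochastic choice whose return has a low mean but a high $\tau$-expectile. The paper uses reward $5$ against returns $-10$ and $100$ with probabilities $0.9$ and $0.1$ at $\tau=0.9$, which gives expectile $45$ and mean $1$. Apart from the numbers, your explicit checks of the remaining states, the $n=1$ case, and Assumption~\ref{assump_pclass} via the finite class $\{\pi_1,\pi_2\}$ are more careful than the paper's proof, which only checks the greedification condition at $s_0$.
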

\begin{proof}
Define the following deterministic MDP with two states $s_0$ (the start state) and $s_1$, with two actions $a_1, a_2$. At $s_0$, taking action $a_1$ gives a reward of 5 and terminates. Taking action $a_2$ leads to $s_1$ and a reward of $0$.
From $s_1$, taking action $a_1$ gives a reward of $-10$ and terminates. Taking action $a_2$ gives a reward of $100$ and terminates. 

Define $\pi$ and $\pi'$ to be the same at $s_1$: $\pi(a_1\mid s_1)=0.9,\ \pi(a_2\mid s_1)=0.1$. They differ only in the first decision at $s_0$:
$\pi(a_1\mid s_0)=1$ (always take $a_1$)
and $\pi'(a_2\mid s_0)=1$ (always take $a_2$).

Now let us examine the returns under $\pi$. $G_{s_0,a_1}$ is the point mass $5$ (deterministic). $G_{s_0,a_2}$ has returns $-10$ w.p. $0.9$ and $+100$ w.p. $0.1$. Setting $\tau = 0.9$ gives 
\begin{align*}
Q_{0.9,\pi}(s_0,a_1)=5,\qquad Q_{0.9,\pi}(s_0,a_2)=45,\qquad V_{0.9,\pi}(s_0)=5\ (\text{$a_1$ always taken from $s_0$})
\end{align*}
Notice also that
\begin{align*}
\psi_{0.9}(G_{s_0,a_1} -V_{0.9,\pi}(s_0)) &= \psi_{0.9}(5-5)=0\\ 
\psi_{0.9}(G_{s_0,a_2} -V_{0.9,\pi}(s_0))&=0.9\psi_{0.9}(-10-5)+0.1\psi_{0.9}(100-5)=14.4
\end{align*}
We can see that $\pi'$ satisfies the monotonic expectile greedy condition (where all probability is on action $a_2$ from $s_0$)
\begin{align*}
\sum_{a} \pi'(a|s_0)\psi_{0.9}(G_{s_0,a} -V_{0.9,\pi}(s_0)) = \psi_{0.9}(G_{s_0,a_2} -V_{0.9,\pi}(s_0)) = 14.4 > 0
\end{align*}
But when we check the expected return, $\pi'$ is worse:  
\begin{align*}
V_{0.5,\pi}(s_0)=5 > V_{0.5,\pi'}(s_0)=0.9(-10)+0.1(100)=1 
\end{align*}
\end{proof}

\section{Algorithm Specification}\label{app_algspec}

Here we present the pseudocode for the Endpoint replay method, as introduced in \ref{sec:end-point}. $\pi
_{\epsilon}(s)$ represents using the $\epsilon$-greedy exploration strategy with $\pi(s) = \arg \max_a (Q_{\theta}(s,a))$. $\ell_{\tau}$ represents the Expectile loss function defined for Expectile Sarsa in \ref{sec-expectile-sarsa}. We set the value of $\tau$ to $0.7$ for Endpoint replay across all experiments. As previously mentioned, the discount factor $\gamma$ is set to 0 in termination per \citep{white2017unifying}. 

\begin{algorithm}[htbp]
\caption{Endpoint Replay}
\label{alg:endpoint_replay}
\begin{algorithmic}[1]
\State \textbf{Initialize} online network $Q_{\theta}$, target network $Q_{\theta^{-}}$
\State \textbf{Initialize} recency buffer $\mathcal{D}_{r}$, coreset buffer $\mathcal{D}_{c}$, lag buffer $\mathcal{D}_{\text{lag}}$
\State \textbf{Initialize} capacities $N_{\text{recency}}$, $N_{\text{coreset}}$  target refresh frequency $N_{\text{target}}$, warmup steps $N_{\text{warmup}}$
\State Observe initial state $s$, choose action $a \sim \pi_{\epsilon}(s)$

\For{$t = 1, 2, \dots, T$}
    \State Execute action $a$, observe reward $r$, next state $s'$, and termination-aware discount factor $\gamma$.
    \State Choose next action $a' \sim \pi_{\epsilon}(s')$ 
    \State Add transition $(s, a, r, s', a', \gamma)$ to $\mathcal{D}_{r}$
    
    \Statex \Comment{\textbf{Buffer Eviction \& Coreset Summarization}}
    \If{$|\mathcal{D}_{r}| > N_{\text{recency}}$}
        \State Pop oldest transition $(s_e, a_e, r_e, s'_e, a'_e, \gamma)$ from $\mathcal{D}_{r}$ and append to $\mathcal{D}_{\text{lag}}$
        
        \If{$|\mathcal{D}_{\text{lag}}| == n$ \textbf{or} termination}
            \State Let $k \gets |\mathcal{D}_{\text{lag}}|$
            \State Let $(s_0, a_0)$ be the state and action from the \textit{first} transition in $\mathcal{D}_{\text{lag}}$
            \State Compute $k$-step return: $g \gets \sum_{i=0}^{k-1} \gamma^i r_{e-k+1+i}$
            \State Pop the first transition from $\mathcal{D}_{\text{lag}}$
            \State Add summary transition $(s_0, a_0, g, s'_e, a'_e, \gamma)$ to $\mathcal{D}_{c}$
        \EndIf
    \EndIf
    
    \Statex \Comment{\textbf{Network Updates}}
    \If{$t > N_{\text{warmup}}$}
        \State Sample batch $\mathcal{M}_r$ of size $B_r$ from $\mathcal{D}_{r}$, and batch $\mathcal{M}_c$ of size $B_c$ from $\mathcal{D}_{c}$
        
        \Statex \hspace{\algorithmicindent} \Comment{Compute Targets}
        \For{\textbf{each} $(s, a, r, s', \gamma) \in \mathcal{M}_r$}
            \State $y_{r} \gets r + \gamma Q_{\theta^{-}}(s', \arg\max_{a^*} Q_{\theta}(s', a^*))$
            \Comment{DDQN target}
        \EndFor
        
        \For{\textbf{each} $(s_e, a_e, g, s_{\text{end}}, a_{\text{end}}, \gamma) \in \mathcal{M}_c$}
            \State $y_{c} \gets g + \gamma^k  Q_{\theta^{-}}(s_{\text{end}}, a_{\text{end}})$
            \Comment{SARSA target}
        \EndFor
        
        \Statex \hspace{\algorithmicindent} \Comment{Compute Losses \& Optimize}
        \Statex \hspace{\algorithmicindent}
        \State $\mathcal{L}_{r}(\theta) \gets \frac{1}{|\mathcal{M}_r|}\sum_{\mathcal{M}_r} \left( y_{r} - Q_{\theta}(s, a) \right)^2$
        \State $\mathcal{L}_{c}(\theta) \gets \frac{1}{|\mathcal{M}_c|}\sum_{\mathcal{M}_c} \ell_\tau \big( y_{c} - Q_{\theta}(s_e, a_e) \big)$
        
        \State Update $\theta$ by minimizing $\mathcal{L}_{r}(\theta) + \mathcal{L}_{c}(\theta)$ via gradient descent 
    \EndIf

        \algstore{endpoint_break}
\end{algorithmic}
\end{algorithm}

\newpage

\begin{algorithm}[H]
\caption{Endpoint Replay (Continued)}
\begin{algorithmic}[1]
    \algrestore{endpoint_break}
    
    \If{$t \pmod{N_{\text{target}}} == 0$} 
        \State $\theta^{-} \gets \theta$
    \EndIf
    \If{$termination$}
        \State Observe new initial state $s$, choose action $a \sim \pi_{\epsilon}(s)$
    \Else
        \State $s \gets s'$, $a \gets a'$
    \EndIf
\EndFor
\end{algorithmic}
\end{algorithm}

\section{The Pinball Environment}

The Pinball domain \citep{konidaris2009skill,lo2024goal} is an episodic environment with simplified physics, where the objective is to navigate the ball to the goal, while avoiding obstacles, by applying force to the ball in all four cardinal directions. The observation space is a 4 dimensional vector of agent's position and velocity in the plane. There are 5 actions, applying force in cardinal directions and a fifth no-op action. The original version of the environment used a complex reward function making analysis difficult. We chose to follow \cite{lo2024goal}'s lead and simplify the reward function to be -1 per step. The discount factor is 0.99 and the environment terminates when the agent reaches the goal region or times out after 1000 steps.

\section{Selected Atari Games}

We selected 12 games from the Arcade Learning Environment \citep{bellemare2013arcade, machado2018revisiting} suite for our empirical evaluation. First we use the Atari-5 games \citep{aitchison2023atari}, selected to correlate with the aggregate score of all the Atari games. In addition, we also selected the following games, specifically chosen to present a diverse set of problem structures:

\begin{itemize}
    \item \textbf{Pong:} Selected to test learning under sparse, delayed rewards.
    \item \textbf{Breakout:} Chosen as a baseline for environments with smooth, continuous score changes.
    \item \textbf{Seaquest:} Included due to its complex, cyclic dynamics.
    \item \textbf{Centipede:} Selected to evaluate robustness against constantly changing input distributions.
    \item \textbf{Ms.\ Pac-Man:} Included to test adaptability to the high variance induced by unpredictable ghost movements.
    \item \textbf{Beam Rider:} Chosen for its somewhat drastic sector and environment changes.
    \item \textbf{Space Invaders:} Selected due to its combination of irreversible state changes and large, sparse rewards.
\end{itemize}

\newpage

\section{Pinball Experiment Details}

In Pinball we use a small DDQN agent as the base learning algorithm of all the methods. The large recency buffer baseline, whose hyperparameters are available in Table \ref{tab:pinball_hypers}, is tuned by sweeping the learning rate over $\{0.0005, 0.001, 0.002, 0.004, 0.008\}$ for 10 seeds and selecting the best average performance. All tested algorithms have the same shared hyperparameters except replay buffer size.

\begin{table}[htbp]
    \centering

    \begin{tabular}{ll}
        \toprule
        \textbf{Hyperparameter} & \textbf{Value} \\
        \midrule
        \addlinespace
        Exploration $\epsilon$ & $0.1$ \\
        Target Network Refresh Rate & $100$ \\
        Buffer Size & $10,000$ \\
        Warmup Steps & $1,000$ \\
        batchsize & $32$ \\
        Network & Two layer ReLU of size 32\\
        Optimizer & Adam \\
        Learning Rate & $0.002$ \\
        Adam $\beta_1$ & $0.9$ \\
        Adam $\beta_2$ & $0.999$ \\
        \bottomrule
    \end{tabular}

    \caption{Hyperparameters for the large buffer DDQN agent in the Pinball environment. Values are in terms of the number of interaction steps.}
    \label{tab:pinball_hypers}
\end{table}

\section{Atari Experiment Details}

In the Atari experiments we use DDQN's original architecture \citep{van2016deep} and use Dopamine's hyperparameters, selected to closely follow the original DQN paper \citep{mnih2015human}, but adopts some aspects of Rainbow such as the Adam optimizer \citep{kingma2015adam}. Similar to Pinball we present the list of hyperparameters for the large buffer baseline in Table \ref{tab:atari-hypers}.

\begin{table}[h]
    \centering
    \begin{tabular}{ll}
        \toprule
        \textbf{Parameter} & \textbf{Value} \\
        \midrule
        \addlinespace
        Final exploration $\epsilon_{\text{final}}$ & 0.01 \\
        Exploration Annealing Frames & 1,000,000 \\
        Buffer Size & 1,000,000 \\
        Warmup Frames & 80,000 \\
        batchsize & 32 \\
        Update Frequency & 16 Frames \\
        Target Network Refresh & 32,000 Frames\\
        Optimizer & Adam \\
        Learning Rate & $6.25 \times 10^{-5}$ \\
        $\beta_1$ & 0.9 \\
        $\beta_2$ & 0.999 \\
        $\epsilon_{\text{adam}}$ & $1.5 \times 10^{-4}$ \\
        \bottomrule
    \end{tabular}
     \caption{Hyperparameters for the large recency DDQN agent in the Atari environment.}
    \label{tab:atari-hypers}
\end{table}

We closely follow the conventional environment settings for the Arcade Learning Environment, adopting sticky actions \citep{machado2018revisiting} instead of initial no-ops. The complete list of environment settings is presented in Table \ref{tab:atari_setting}.

\begin{table}[H]
    \centering
    \begin{tabular}{ll}
        \toprule
        \textbf{Setting} & \textbf{Value} \\
        \midrule
        Episode Cutoff & 108,000 Frames \\
        Repeat Action Probability & 0.25 \\
        Frame Skip & 4 \\
        Frame Stack & 4 \\
        Action Set & Limited \\
        Image size & $84 \times 84$ \\
        Grayscale & Yes \\
        \bottomrule
    \end{tabular}
    \caption{Environment settings for the Atari experiments.}
    \label{tab:atari_setting}
\end{table}

In the paper we present aggregated performance of agents in Atari. We use min--max normalization to aggregate the results. In each game, we map the performance of worst performing run among the algorithms of interest to 0.0, while mapping the best run to 1.0. Then we report the averaged normalized performance.

\section{Sign Test}
We use the sign test as our statistical significance test for comparing the
performances of agents in the Atari experiments. We tested 12 Atari games,
each with 10 seeds. To compare two agents $X$ and $Y$, we have in total 120
pairs of average returns $(x_i, y_i)$ for games and seeds. The pairs that have
no difference in $x_i$ and $y_i$ are omitted, and eventually we can have $n$
pairs.

We perform a one-sided test by asking if $X$ has a higher return than $Y$:
$X > Y$. Let $w$ be the number of pairs where $x_i > y_i$, and the null
hypothesis

\[
H_0: \text{Pr}(X>Y) = 0.5.
\]
Assuming that $H_0$ is true, then $W$ follows a binomial distribution
\[
W \sim \mathcal{B}(n, 0.5).
\]

The right-tail value is computed by
\[
p = \text{Pr}(W \ge w),
\]
which is the $p$-value for the alternative hypothesis
$H_1: \text{Pr}(X>Y) > 0.5$. We choose a significance level $\alpha = 0.05$. If $p \le \alpha$, we reject the null hypothesis and conclude that agent $X$ is better than agent $Y$ by achieving statistically significantly higher returns.

\section{Atari Results in Individual Games}
Here we present a complete set of the Atari experiments in individual games. The first set of plots are bar plots of average return across 10 seeds for each algorithm in each game. The next set of plots are learning curves showing average performance as well as 95\% bootstrap confidence intervals.

\begin{figure}
    \centering
    \includegraphics[width=1.0\linewidth]{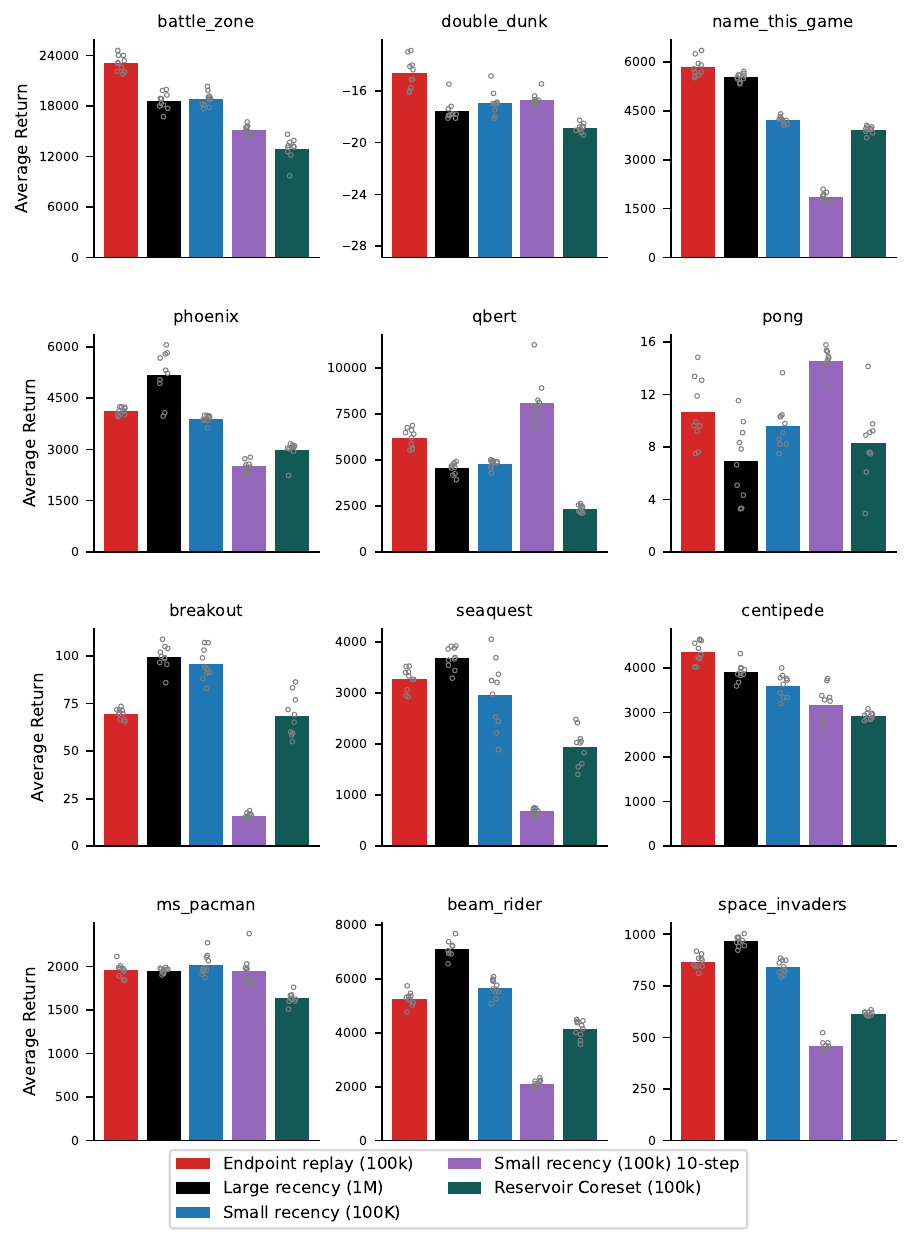}
    \caption{Performance of Endpoint replay and other baselines in individual Atari games in the 100k buffer setting. Bars represent average performance across 10 seeds and dots represent performance of each seed. Endpoint replay (100K) outperforms Small recency (100K) by a significant margin according to the Sign test (better in 82 out of 120 paired seeds across 12 games).}
    \label{fig:atari_100k}
\end{figure}

\begin{figure}
    \centering
    \includegraphics[width=1.0\linewidth]{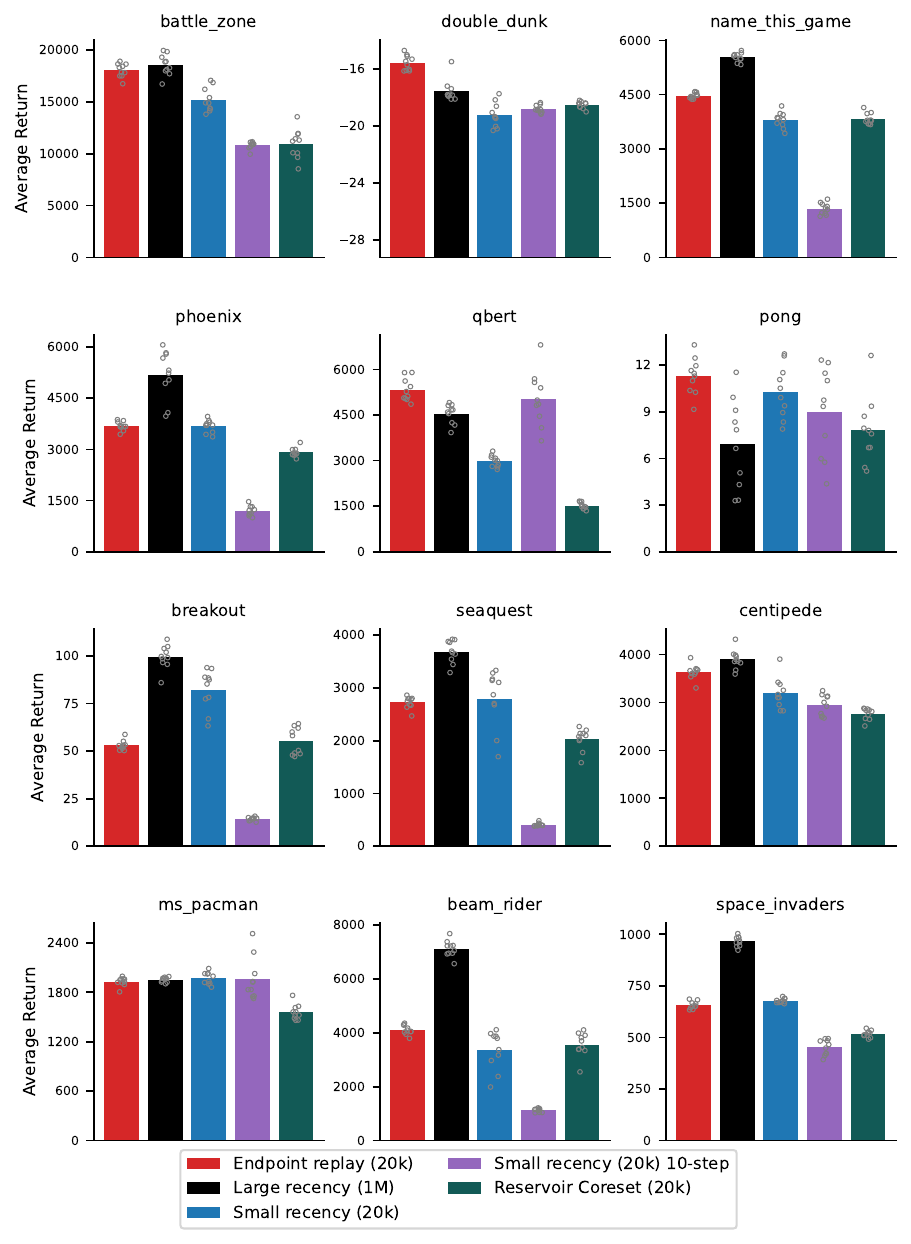}
    \caption{Performance of Endpoint replay and other baselines in individual Atari games in the 20k buffer setting. Bars represent average performance across 10 seeds and dots represent performance of each seed. Endpoint replay (20K) outperforms Small recency (20K) by a significant margin according to the Sign test (better in 80 out of 120 paired seeds across 12 games).}
    \label{fig:atari_20k}
\end{figure}

\begin{figure}
    \centering
    \includegraphics[width=1.0\linewidth]{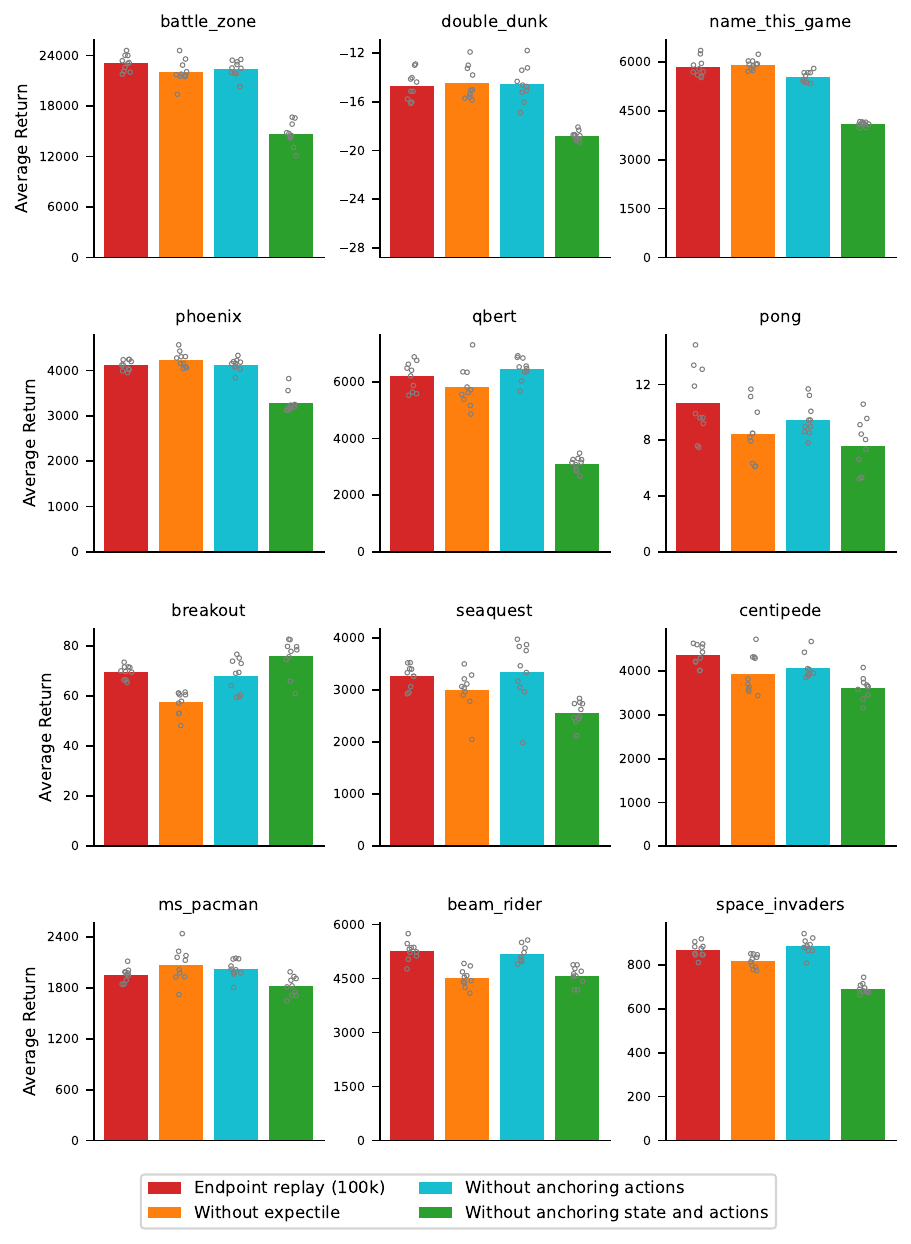}
    \caption{Performance of Endpoint replay and its ablations in individual Atari games in the 100k buffer setting. Bars represent average performance across 10 seeds and dots represent performance of each seed. Within 120 paired seeds across 12 games, Endpoint replay (100K) outperforms Without expectile in 79 seeds, Without anchoring actions in 67 seeds, and Without anchoring state and actions in 109 seeds. According to the Sign test ($\alpha=0.05$), Endpoint replay (100K) outperforms Without expectile and Without anchoring state and actions by a significant margin.}
    \label{fig:ablations_atari_100k}
\end{figure}

\begin{figure}
    \centering
    \includegraphics[width=1.0\linewidth]{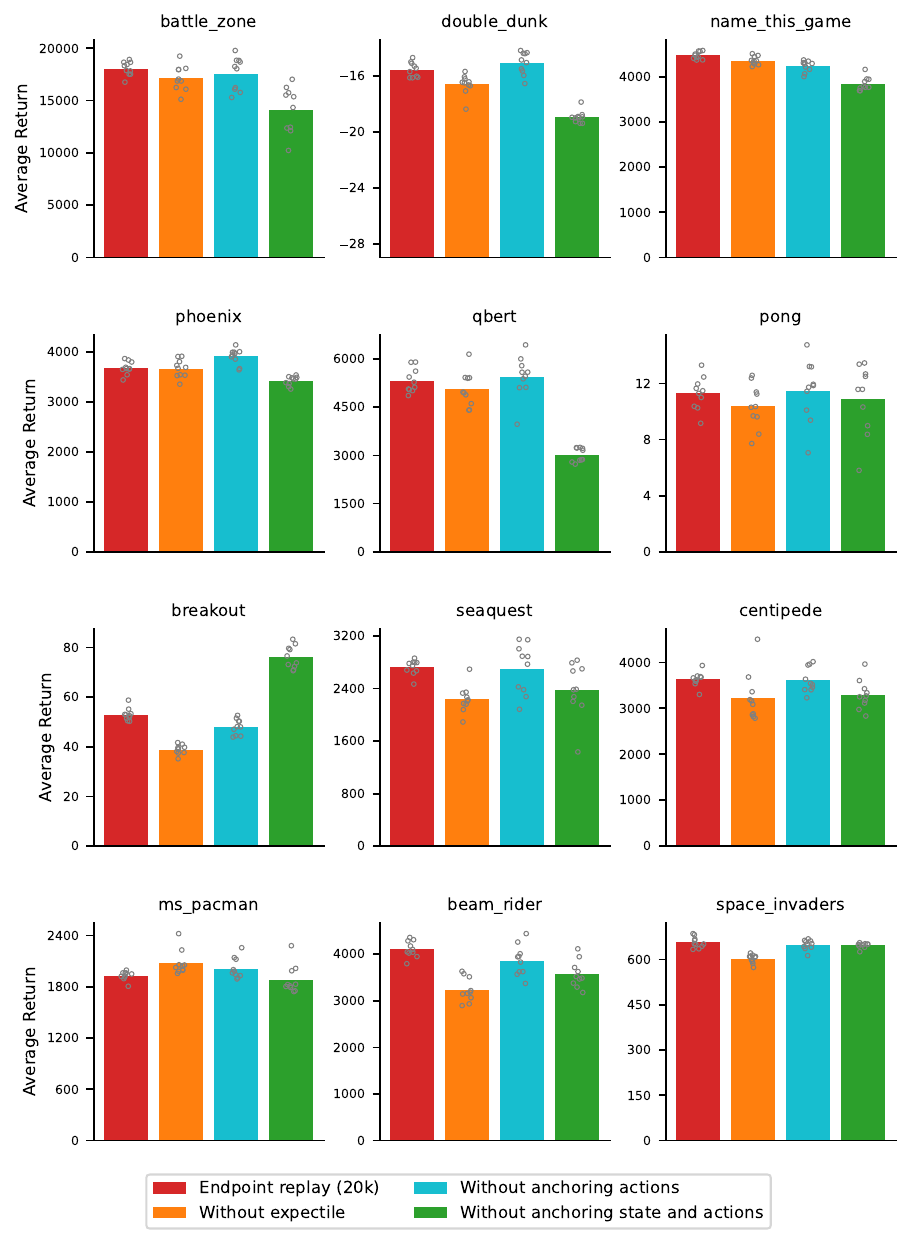}
    \caption{Performance of Endpoint replay and its ablations in individual Atari games in the 20k buffer setting. Bars represent average performance across 10 seeds and dots represent performance of each seed. Within 120 paired seeds across 12 games, Endpoint replay (20K) outperforms Without expectile in 93 seeds, Without anchoring actions in 62 seeds, and Without anchoring state and actions in 96 seeds. According to the Sign test ($\alpha=0.05$), Endpoint replay (20K) outperforms Without expectile and Without anchoring state and actions by a significant margin.}
    \label{fig:ablations_atari_20k}
\end{figure}

\begin{figure}
    \centering
    \includegraphics[width=1.0\linewidth]{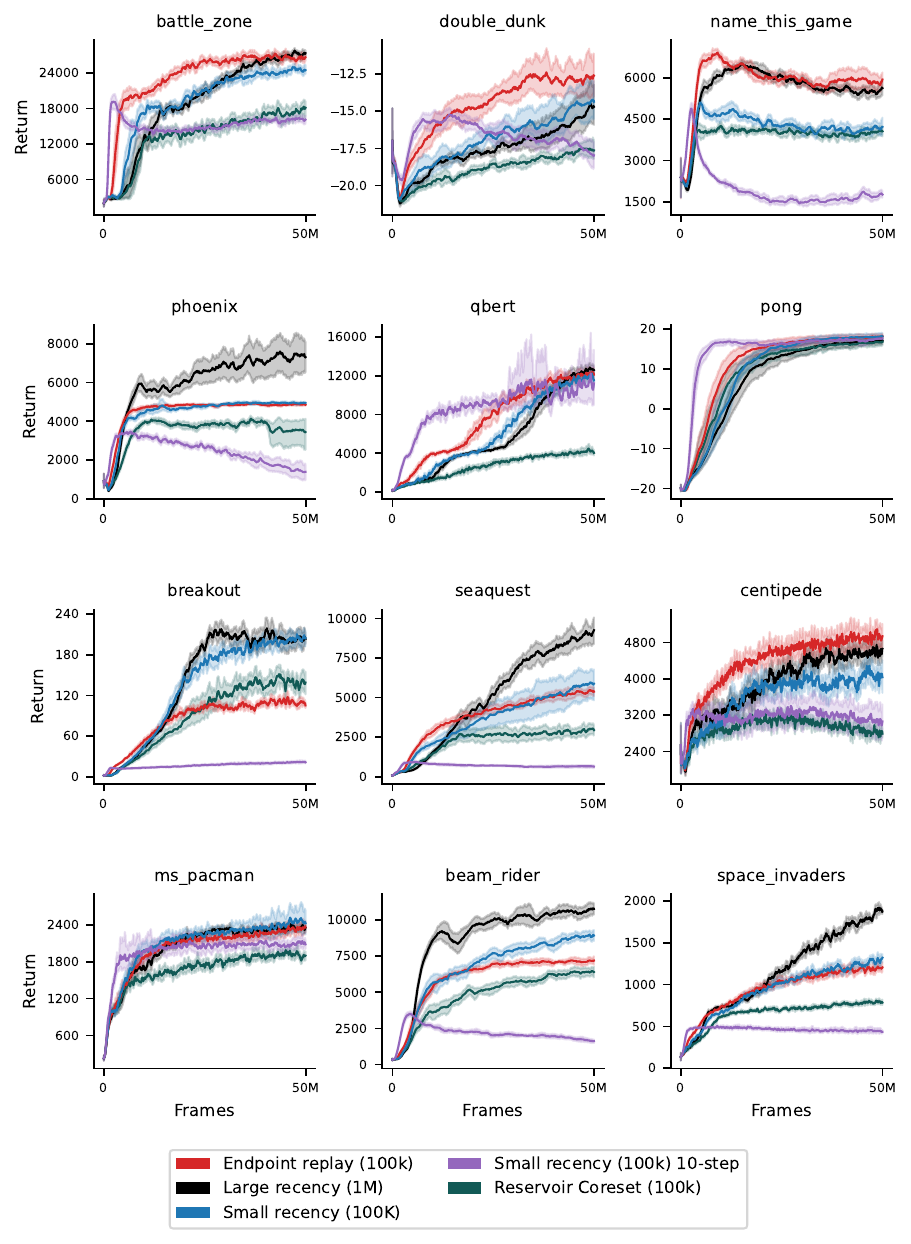}
    \caption{Performance of Endpoint replay and other baselines in individual Atari games in the 100k buffer setting. Return of each run is smoothed over the last 100 episodes then aggregated across 10 seeds with shaded regions showing 95\% bootstrap CI.}
    \label{fig:atari_100k_curves}
\end{figure}

\begin{figure}
    \centering
    \includegraphics[width=1.0\linewidth]{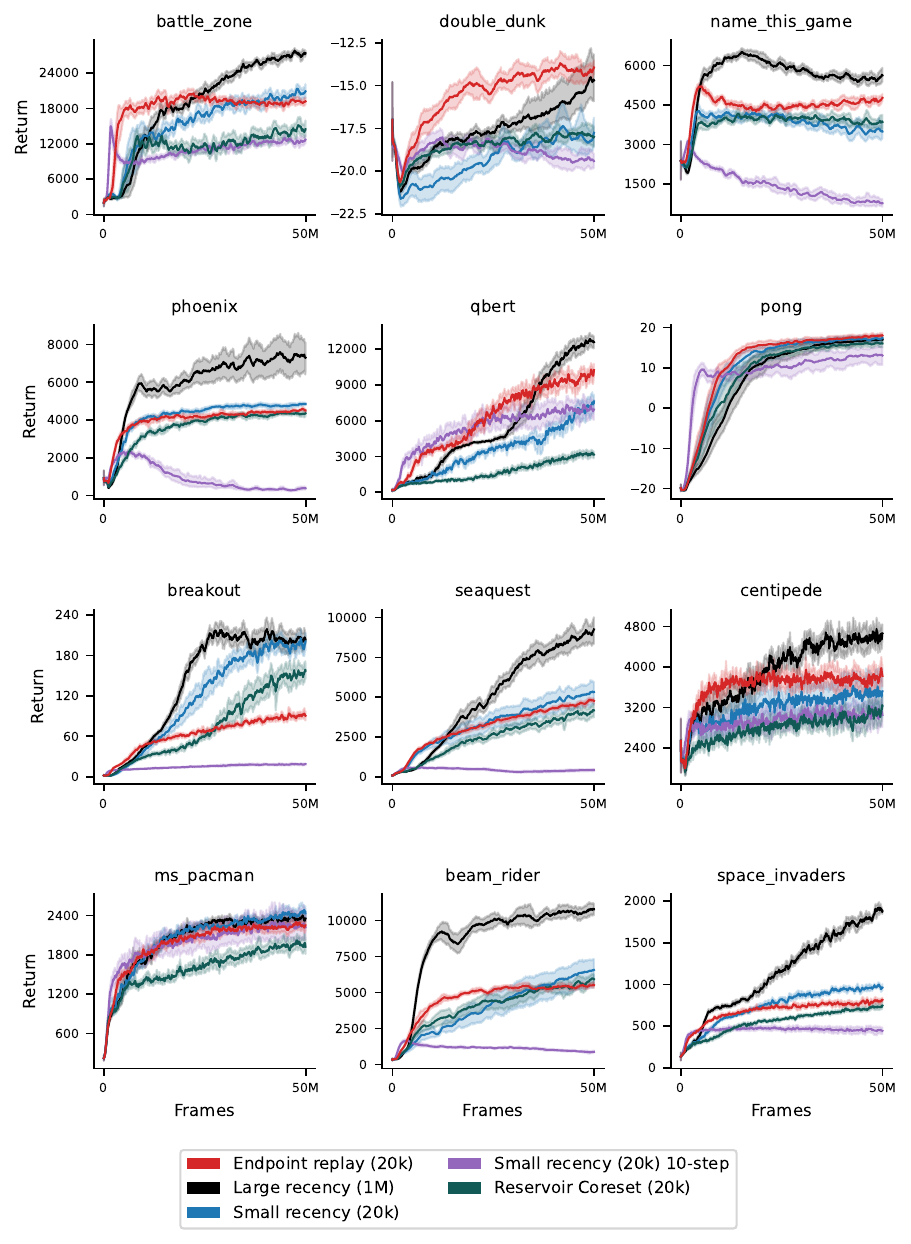}
    \caption{Performance of Endpoint replay and its ablations in individual Atari games in the 100k buffer setting. Return of each run is smoothed over the last 100 episodes then aggregated across 10 seeds with shaded regions showing 95\% bootstrap CI.}
    \label{fig:atari_20k_curves}
\end{figure}

\begin{figure}
    \centering
    \includegraphics[width=1.0\linewidth]{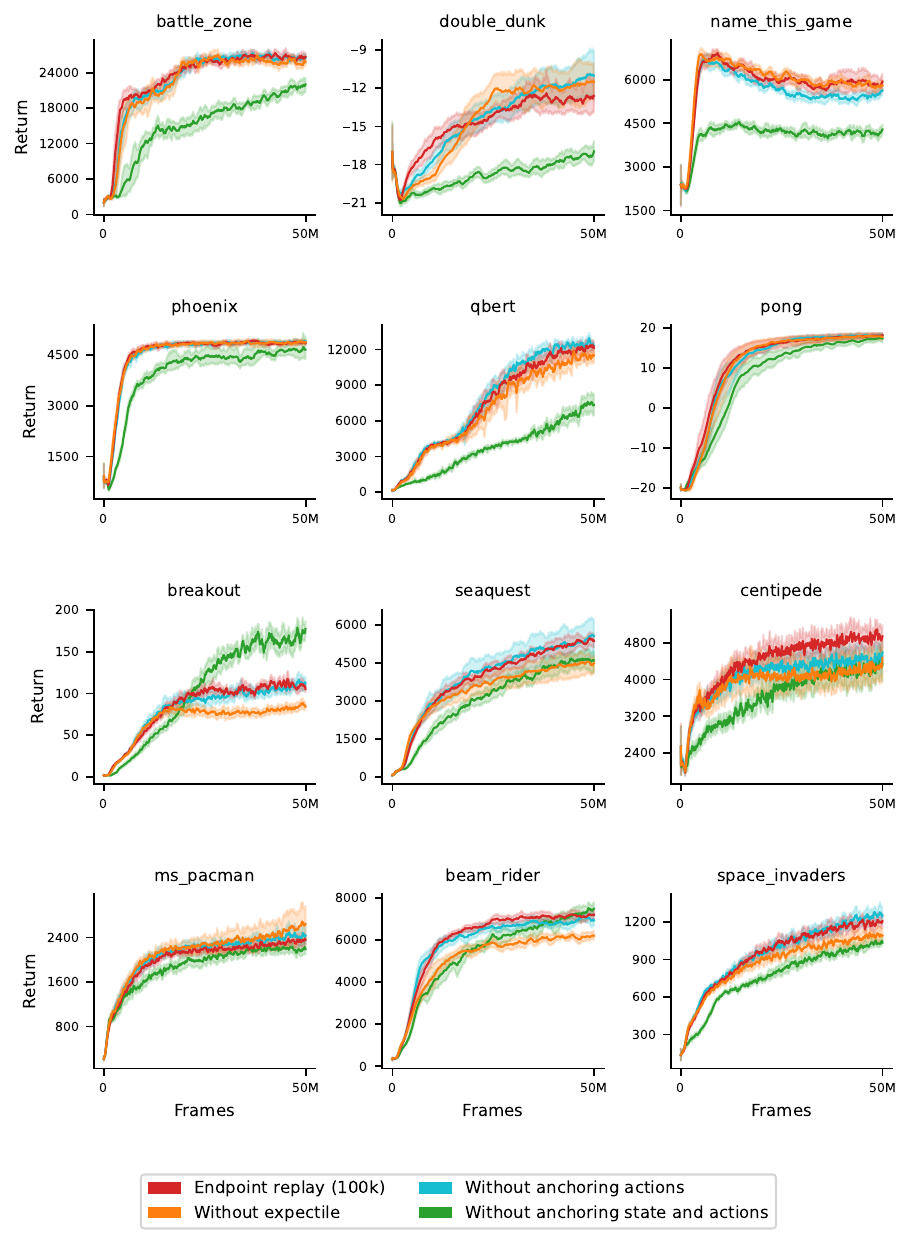}
    \caption{Performance of Endpoint replay and its ablations in individual Atari games in the 100k buffer setting. Return of each run is smoothed over the last 100 episodes then aggregated across 10 seeds with shaded regions showing 95\% bootstrap CI.}
    \label{fig:ablations_atari_100k_curves}
\end{figure}

\begin{figure}
    \centering
    \includegraphics[width=1.0\linewidth]{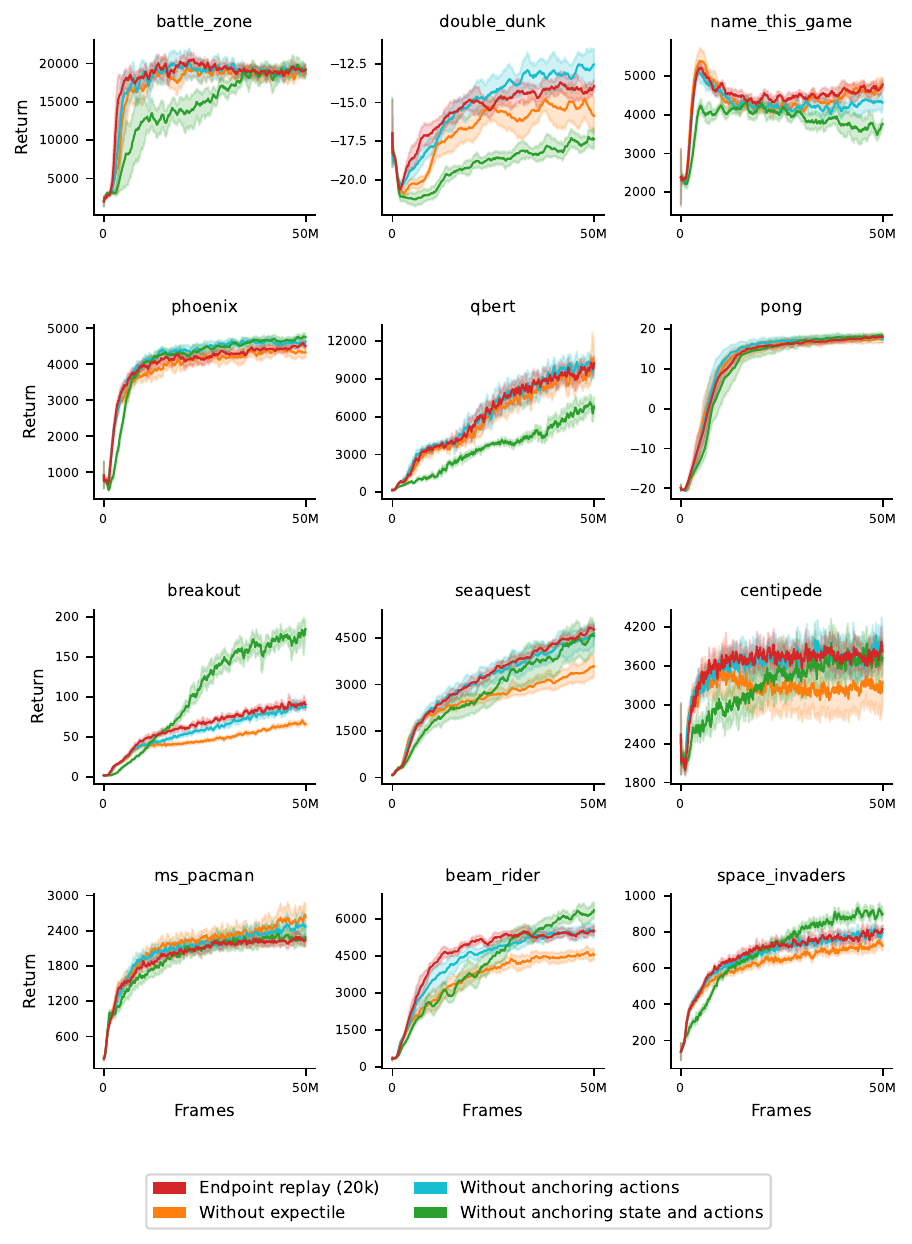}
    \caption{Performance of Endpoint replay and its ablations in individual Atari games in the 20k buffer setting. Return of each run is smoothed over the last 100 episodes then aggregated across 10 seeds with shaded regions showing 95\% bootstrap CI.}
    \label{fig:ablations_atari_20k_curves}
\end{figure}

\end{document}